\theoremstyle{plain}
\newtheorem{theorem}{Theorem}[section]
\newtheorem{proposition}[theorem]{Proposition}
\newtheorem{lemma}[theorem]{Lemma}
\newtheorem{corollary}[theorem]{Corollary}
\theoremstyle{definition}
\newtheorem{definition}[theorem]{Definition}
\newtheorem{assumption}[theorem]{Assumption}
\theoremstyle{remark}
\newtheorem{remark}[theorem]{Remark}
\newcommand{\cmark}{\ding{51}}%
\newcommand{\xmark}{\ding{55}}%
\icmltitlerunning{ProbAbilistic Gradient Estimation for Policy Gradient (PAGE-PG)}
\begin{document}

\twocolumn[
\icmltitle{PAGE-PG: A Simple and Loopless Variance-Reduced Policy Gradient Method with Probabilistic Gradient Estimation}



\icmlsetsymbol{equal}{*}

\begin{icmlauthorlist}
\icmlauthor{Matilde Gargiani}{yyy}
\icmlauthor{Andrea Zanelli}{zzz}
\icmlauthor{Andrea Martinelli}{yyy}
\icmlauthor{Tyler Summers}{xxx}
\icmlauthor{John Lygeros}{yyy}
\end{icmlauthorlist}

\icmlaffiliation{yyy}{Automatic Control Laboratory, ETH Zurich, Switzerland}
\icmlaffiliation{zzz}{Department of Mechanical and Process Engineering, ETH Zurich, Switzerland}
\icmlaffiliation{xxx}{Department of Mechanical Engineering, University of Texas at Dallas}

\icmlcorrespondingauthor{Matilde Gargiani}{gmatilde@ethz.ch}

\icmlkeywords{Machine Learning, ICML}

\vskip 0.3in
]


\printAffiliationsAndNotice{}

\begin{abstract}
Despite their success, policy gradient methods suffer from high variance of the gradient estimate, which can result in unsatisfactory sample complexity.
Recently, numerous variance-reduced extensions of policy gradient methods with provably better sample complexity and competitive numerical performance have been proposed.
After a compact survey on some of the main variance-reduced REINFORCE-type methods, we propose ProbAbilistic Gradient Estimation for Policy Gradient (PAGE-PG), a
novel loopless variance-reduced policy gradient method based on a probabilistic switch between two types of updates. Our method is inspired by the PAGE estimator for supervised learning and leverages importance sampling to obtain an unbiased gradient estimator. We show that PAGE-PG enjoys a $\mathcal{O}\left( \epsilon^{-3} \right)$ average sample complexity to reach an $\epsilon$-stationary solution, which matches the sample complexity of its most competitive counterparts under the same setting. 
A numerical evaluation confirms the competitive performance of our method on classical control tasks.
\end{abstract}

\section{Introduction}
\label{introduction}
Policy gradient methods have proved to be really effective in many challenging deep reinforcement learning (RL) applications~\cite{sutton2018reinforcement}. Their success is also due to their versatility as they are applicable to any differentiable policy parametrization, including complex neural networks, and they admit easy extensions to model-free settings and continuous state and action spaces. This class of methods has a long history in the RL literature that dates back to~\cite{williams1992}, but only very recent work~\cite{agarwal2019} has characterized their theoretical properties, such as convergence to a globally optimal solution and sample and iteration complexity. Since in RL it is generally not possible to compute the exact gradient, but we rely on sample-based approximations, policy gradient methods are negatively affected by the high-variance of the gradient estimate, which slows down convergence and leads to unsatisfactory sample complexity. To reduce the variance of the gradient estimators, actor-critic methods are deployed, where not only the policy, but also  the state-action value function or the advantage function are parameterized~\cite{pmlr-v48-mniha16}. Alternatively, taking inspiration from stochastic optimization, various variance-reduced policy gradient methods have been proposed~\cite{sidford2018, papini2018, xu2019, xu2021, yuan2020,Wang2021}. 

In this work, we focus on variance-reduced extensions of REINFORCE-type methods, such as REINFORCE~\cite{williams1992}, GPOMDP~\cite{baxter2001} and their variants with baseline~\cite{sutton2018reinforcement}. After reviewing the principal variance-reduced extensions of REINFORCE-type methods, we introduce a novel variance-reduced policy gradient method, PAGE-PG, based on the recently proposed PAGE estimator for supervised learning~\cite{li2021}. We prove that PAGE-PG only takes $\mathcal{O}\left( \epsilon^{-3}\right)$ trajectories on average to achieve an $\epsilon$-stationary policy, which translates into a near-optimal solution for gradient dominated objectives. This result matches the bounds on total sample complexity of the most competitive variance-reduced REINFORCE-type methods under the same setting. The key feature of our method consists in replacing the double-loop structure typical of variance-reduced methods with a probabilistic switch between two types of updates. According to recent works in supervised learning~\cite{kovalev2020, li2021}, loopless variance-reduced methods are easier to tune, analyze and generally lead to superior and more robust practical behavior. For policy gradient optimization, similar advantages are discussed in~\cite{yuan2020}, where the authors propose STORM-PG, which, to the best of our knowledge, is the only other loopless variance-reduced policy gradient counterpart to our method. With respect to STORM-PG, our method enjoys a better theoretical rate of convergence. In addition, our experiments show the competitive performance of PAGE-PG on classical control tasks. Finally, we describe the limitations of the considered methods, discuss promising future extensions as well as the importance of incorporating noise annealing and adaptive strategies in the PAGE-PG's update. These might favor exploration in the early stages of training and improve convergence in presence of complex non-concave landscapes~\cite{neelakantan2015, smith2018, zhou2019}.

\textbf{Main contributions.} Our main contributions are summarized below.
\begin{itemize}
    \itemsep-0.2em 
    \item We propose PAGE-PG, a novel loopless variance-reduced extension of REINFORCE-type methods based on a probabilistic update.
    \item We show that PAGE-PG enjoys a fast rate of convergence and achieves an $\epsilon$-stationary policy within $\mathcal{O}\left( \epsilon^{-3}\right)$ trajectories on average. We further show that, with gradient dominated objectives, similar results are valid for near-optimal solutions. 
\end{itemize}

\section{Problem Setting}\label{sec: problem setting}

In this section, we describe the problem setting and briefly discuss the necessary background material on REINFORCE-type policy gradient methods.

\noindent\textbf{Markov Decision Process.} The RL paradigm is based on the interaction between an agent and the environment. In the standard setting, the agent observes  the state of the environment and, based on that observation, plays an action according to a certain policy. As a consequence, the environment transits to a next state and a reward signal is emitted from the environment back to the agent. This process is repeated over a horizon of length $H>0$, with $H<\infty$ in the episodic setting and $H\rightarrow\infty$ in the infinite-horizon setting. From a mathematical viewpoint, Markov Decision Processes (MDPs) are a widely utilized mathematical tool to describe RL tasks. In this work we consider discrete-time episodic MDPs $\mathcal{M}=\left\{ \mathcal{S}, \mathcal{A}, P, r, \gamma , \rho \right\}$, where $\mathcal{S}$ is the state space; $\mathcal{A}$ is the action space; $P$ is a Markovian transition model, where $P(s'\,|\,s,a)$ defines the transition density to state $s'$ when taking action $a$ in state $s$; $r:\mathcal{S}\times\mathcal{A}\rightarrow [-R,R]$ is the reward function, where $R>0$ is a constant; $\gamma\in(0,1)$ is the discount factor; and $\rho$ is the initial state distribution. 
The agent selects the actions according to a stochastic stationary policy $\pi$, which, given a state $s$, defines a density distribution over the action space $\pi(\cdot\,|\,s)$. 
A trajectory $\tau= \left\{ s_h, a_h \right\}_{h=0}^{H-1}$ is a collection of states and actions with $s_0\sim\rho$ and, for any time-step $h\geq 0$, $a_h\sim \pi(\cdot\,|\,s_h)$ and $s_{h+1}\sim P(\cdot\,|\,s_h,a_h)$. We denote the trajectory distribution induced by policy $\pi_{\theta}$ as
$p(\tau\,|\,\theta)$.

The value function $V^{\pi}:\mathcal{S}\rightarrow \mathbb{R}$ associated with a policy $\pi$ and initial state $s$ is defined as

$$ V^{\pi}(s) := \mathbb{E}\left[ \sum_{h=0}^{H-1} \gamma^h r(s_h, a_h)\,|\,\pi, s_0=s\right]\,, $$
where the expectation is taken with respect to the trajectory distribution. With an overloaded notation, we denote with $V^{\pi}(\rho)$ the expected value under the initial state distribution $\rho$, i.e.,
\begin{equation}\label{eq: expected_Vpi}
V^{\pi}(\rho):=\mathbb{E}_{s_0\sim\rho} \left[ V^{\pi}(s_0) \right]\,. 
\end{equation}

The goal of the agent generally is to find the policy $\pi$ that maximizes $V^{\pi}(\rho)$ in~\eqref{eq: expected_Vpi}.

\noindent\textbf{Policy Gradient.}
Given finite state and action spaces, the policy can be exactly coded with $\vert \mathcal{S} \vert \times \vert \mathcal{A} \vert$ parameters in the tabular setting. However, the tabular setting becomes intractable for large state and action spaces. In these scenarios, as well as in infinite countable and continuous spaces, we generally resort to parametric function approximations. In particular, instead of optimizing over the full space of stochastic stationary policies, we restrict our attention to the class of stochastic policies that is described by a finite-dimensional differentiable parametrization $\Pi_{\theta} =\left\{\pi_{\theta} \,|\,\theta\in\mathbb{R}^d \right\}$, such as a deep neural network~\cite{levine2014}. The addressed problem therefore becomes
\begin{equation}\label{eq: main_problem}
\max_{\theta \in \mathbb{R}^d} V^{\pi_{\theta}}(\rho)\,.
\end{equation}

We denote with $V^*$ the optimal value. To simplify the notation, we use $V(\theta)$ to denote $V^{\pi_{\theta}}(\rho)$, $\theta$ to denote $\pi_{\theta}$ and $R(\tau)=\sum_{h=0}^{H-1}\gamma^h r(s_h, a_h)$ to denote the discounted cumulative reward associated with trajectory $\tau$.
Problem~\eqref{eq: main_problem} can be addressed via gradient ascent,  which updates the parameter vector by taking fixed steps of length $\eta>0$ along the direction of the gradient. The iterations are defined as
\begin{equation}\label{eq: gd_iteration}
\theta_{t+1} = \theta_t + \eta \nabla_{\theta} V(\theta_t) \,,
\end{equation}
where the gradient is given by
\begin{equation}\label{eq: grad}
\begin{aligned}
\nabla_{\theta} V(\theta) 
&=\mathbb{E}_{\tau\sim p(\cdot\,|\,\theta)} \left[ \sum_{h=0}^{H-1} \nabla_{\theta}\log \pi_{\theta}(a_h\,|\,s_h) R(\tau)\right]\,.
\end{aligned}
\end{equation}

In the model-free setting, we cannot compute the exact gradient as we do not have access to the MDP dynamics. Instead, given a certain policy $\theta$, we simulate a finite number $N>0$ of trajectories, which are then used to approximate Equation~\eqref{eq: grad} via Monte Carlo
\begin{equation}\label{eq: reinforce_grad}
\hat{\nabla}_{\theta} V^{\text{RF}}(\theta) = \frac{1}{N}\sum_{i=1}^N  \sum_{h=0}^{H-1} \nabla_{\theta}\log \pi_{\theta}(a^{i}_h\,|\,s^{i}_h) R(\tau_i)\,,
\end{equation} 
where each trajectory $\tau_i = \left\{ s_h^{i}, a_h^{i}\right\}_{h=0}^{H-1}$ is generated according to the trajectory distribution $p(\cdot\,|\,\theta)$. The estimator in Equation~\eqref{eq: reinforce_grad} is also known as the REINFORCE estimator.
An alternative is given by the GPOMDP estimator
\begin{equation}\label{eq: gpomdp}
\hat{\nabla}_{\theta} V^{\text{GPOMDP}}(\theta) = \frac{1}{N}\sum_{i=1}^N  \sum_{h=0}^{H-1} \gamma^{h}r(s_h^i,a_h^i) Z_{\theta, h}\,,
\end{equation}
where for compactness $Z_{\theta, h}=\sum_{z=0}^h \nabla_{\theta}\log\pi_{\theta}(a_z^i\,|\,s_z^i)$ . Both REINFORCE and GPOMDP provide unbiased estimates of the gradient, but they are not equivalent in terms of variance. Specifically for GPOMDP, by only considering the reward-to-go instead of the full reward, we are removing potentially  noisy  terms  and  therefore  lowering  the  variance  of  our  estimate~\cite{NIPS2011_85d8ce59}.
In addition, since $\mathbb{E}\left[ \nabla_{\theta} \log \pi_{\theta}(a\,|\,s) b(s)\right] = 0$ with $b(s)$ being a function of the state, e.g. the value function $V^{\pi}(s)$, both the REINFORCE and GPOMDP estimators can be used in combination with a baseline.  

The discussed estimators (with or without baseline) are deployed in place of the exact gradient in Equation~\eqref{eq: gd_iteration}, leading to the REINFORCE and GPOMDP algorithms. These methods are reminiscent of stochastic gradient ascent~\cite{bottou2021} that also relies on sample-based estimators of the true gradient. 

\noindent\textbf{Notation.} With an overloaded notation, we use $g(\tau_i\,|\,\theta)=\sum_{h=0}^{H-1} \nabla_{\theta}\log \pi_{\theta}(a^{i}_h\,|\,s^{i}_h) R(\tau_i)$ for the REINFORCE estimator, and $g(\tau_i\,|\,\theta)=\sum_{h=0}^{H-1} \gamma^{h}r(s_h^i,a_h^i) Z_{\theta, h}$ for the GPOMDP estimator. 

\section{Related Work}\label{sec: related work}
Variance-reduction techniques have been first introduced for training supervised machine learning models, such as logistic regression, support vector machines and neural networks.
Supervised learning is often recast into a finite-sum empirical risk minimization problem that in its simplest takes the form
\begin{equation}\label{eq: finite-sum problem}
    \min_{\theta \in\mathbb{R}^d} f(\theta):= \frac{1}{n}\sum_{i=1}^n \ell (f_i(\theta))\,.
\end{equation}
In the supervised learning scenario, $n$ is the size of the training dataset $\mathcal{D} = \left\{(x_i, y_i)\right\}_{i=1}^n$ and $\ell$ is a loss function that measures the discrepancy between the model prediction $f_i(\theta) = f(x_i; \theta)$ and the true value $y_i$. If $f$ is smooth and satisfies the Polyak-Łojasiewicz (PL) condition, gradient descent with an appropriate constant step-size enjoys a global linear rate of convergence~\cite{karimi2020}. Despite its fast rate, often the full gradient computation makes the iterations of gradient descent too expensive because of the large size of the training dataset. Mini-batch gradient descent replaces the full gradient with an estimate computed over a randomly sampled subset of the available samples. The method requires a decreasing step-size to control the variance and achieve convergence. As a consequence, the lower-iteration cost comes at the price of a slower sublinear convergence rate~\cite{karimi2020}. A better trade-off between computational costs and convergence rate is achieved by variance-reduced gradient methods, such as SVRG~\cite{NIPS2013_ac1dd209}, Katyusha~\cite{allen-zhu2017}, SARAH~\cite{pmlr-v70-nguyen17b}, STORM~\cite{NEURIPS2019_b8002139}, L-SVRG and L-Katyusha~\cite{kovalev2020}, and PAGE~\cite{li2021}. Because of their provably superior theoretical properties and their competitive numerical performance, these methods have attracted great attention from the machine learning community in the past decade. A key structural feature of classical variance-reduced methods, such as SVRG, Katyusha and SARAH, is the double-loop structure. In the outer loop a full pass over the training data is made in order to compute the exact gradient, which is then used in the inner loop together with new stochastic gradient information to construct a variance-reduced estimator of the gradient. However, as underlined in~\cite{zhou2019}, the double-loop structure complicates the analysis and tuning, since the optimal length of the inner loop depends on the value of some structural constants that are generally unknown and often very hard to estimate. This inconvenience has fueled recent efforts from the supervised learning community to develop loopless variance-reduced gradient methods, such as L-SVRG, L-Katyusha and PAGE. In these methods the outer loop is replaced by a probabilistic switch between two types of updates: with probability $p$ a full gradient computation is performed, while with probability $1-p$ the previous gradient is reused with a small adjustment that varies based on the method. In particular, L-SVRG and L-Katyusha, which are designed for smooth and strongly convex objective functions, recover the same fast theoretical rates of their loopy counterparts, SVRG and Katyusha, but they require less tuning and lead to superior and more robust practical behavior~\cite{kovalev2020}. Similar results also hold for PAGE~\cite{li2021}, which is designed for non-convex problems. In particular, in the non-convex finite-sum setting, PAGE achieves the optimal convergence rate and numerical evidence confirms its competitive performance. 

Motivated by the great success of variance-reduction techniques in supervised learning, numerous recent works have explored their deployment in RL, and, in particular, their adaptation for policy gradient~\cite{papini2018, xu2021, yuan2020, Wang2021}. As discussed in Section~\ref{sec: problem setting}, the gradient is generally estimated based on a finite number $N$ of observed trajectories. In online RL, the trajectories are sampled at each policy change. This is particularly costly since it requires one to simulate the system $N$ times for every parameter update. Generally the batch-size is tuned to achieve the best trade-off between the cost of simulating the system and the variance of the estimate. This motivates the use of variance-reduced gradient methods,  
that use past gradients to reduce variance, leading to an improvement in terms of sample complexity. 
The deployment of variance-reduction techniques to solve Problem~\eqref{eq: main_problem} is not straightforward and requires some adaptations to deal with the specific challenges of the RL setting~\cite{papini2018}. Differently from the finite-sum scenario, Problem~\eqref{eq: main_problem} can not be recast as a finite-sum problem, unless both the state and action spaces are finite. This and the fact that the MDP dynamics are unknown prevent from the computation of the full gradient, which is generally replaced by an estimate based on a large batch-size. 

An additional difficulty comes from the fact that the data distribution changes over time, since it depends on the parameter $\theta$, which gets updated during training. This is known as \textit{distribution shift} and requires the deployment of importance weighting in order to reuse past information without adding a bias to the gradient estimate. In particular, suppose we have two policies $\theta_1$ and $\theta_2$, where $\theta_2$ is used for the interaction with the system, while we aim at obtaining an unbiased estimate of the gradient with respect to $\theta_1$. The unbiased off-policy extension of the REINFORCE estimator~\cite{papini2018} is obtained by replacing $g(\tau_i\,|\,\theta)$ in Equation~\eqref{eq: reinforce_grad}  with the following quantity
\begin{equation}
g^{\omega_{\theta_2}}(\tau_i\,|\theta_1) = \omega(\tau_i\,|\,\theta_2,\theta_1)\sum_{h=0}^{H-1} \nabla_{\theta}\log \pi_{\theta_1}(a_h^i\,|\,s_h^i)R(\tau_i)\,,    
\end{equation}
where $\omega(\tau_i\,|\,\theta_2, \theta_1)= \Pi_{j=0}^{H-1} \frac{\pi_{\theta_1}(a_j^i\,|\,s_j^i)}{\pi_{\theta_2}(a_j^i\,|\,s_j^i)}$ is the importance weight for the full trajectory realization $\tau_i$. Similarly, the off-policy extension of the GPOMDP estimator~\cite{papini2018} is obtained by replacing $g(\tau_i\,|\,\theta)$ in Equation~\eqref{eq: gpomdp} with the following quantity
\begin{equation}
g^{\omega_{\theta_2}}(\tau_i\,|\theta_1) =\sum_{h=0}^{H-1} \omega_{0:h}(\tau_i\,|\,\theta_2,\theta_1)\gamma^{h}r(s_h^i,a_h^i) Z_{\theta_1, h}\,, 
\end{equation}
where $\omega_{0:h}(\tau_i\,|\,\theta_2,\theta_1)=\Pi_{j=0}^h \frac{\pi_{\theta_1}(a_j^i\,|\,s_j^i)}{\pi_{\theta_2}(a_j^i\,|\,s_j^i)}$ is the importance weight for the trajectory realization $\tau_i$ truncated at time $h$. 
Clearly, for the importance weights to be well-defined, the policy $\theta_2$ needs to have a non-zero probability of selecting any action in every state. This assumption is implicitly required to hold where needed throughout the paper. It is easy to verify that, for both the off-policy extensions of REINFORCE and GPOMDP, $\mathbb{E}_{\tau\sim p(\cdot\,|\,\theta_1)}\left[g^{\omega_{\theta_2}}(\tau\,|\,\theta_1) \right] = \mathbb{E}_{\tau\sim p(\cdot\,|\,\theta_2)}\left[g(\tau\,|\,\theta_2) \right]$, leading to an unbiased variance-reduced estimate of the gradient.

\subsection{Variance-Reduced REINFORCE-type Methods}
We now briefly review some of the state-of-the-art variance-reduced REINFORCE-type methods to solve Problem~\eqref{eq: main_problem}. We use $g(\tau\,|\,\theta)$ and $g^{\omega_{\theta_2}}(\tau\,|\,\theta_1)$ to refer to both the REINFORCE and GPOMDP estimators. \\
\textbf{Stochastic Varaice-Reduced Policy Gradient (SVRPG)}, first proposed in~\cite{papini2018} and then further analyzed in~\cite{xu2019}, adapts the stochastic variance-reduced gradient method for finite-sum problems~\cite{NIPS2013_ac1dd209} to deal with the RL challenges as discussed above.
The method is characterized by a double loop structure, where the outer iterations are called epochs. At the $s$-th epoch, a snapshot of the current iterate $\theta_0^s$ is taken. Then, $N>>1$ trajectories $\left\{ \tau_i \right\}_{i=1}^N$ are collected based on the current policy and used to compute the gradient estimator $v_0^s=\frac{1}{N}\sum_{i=1}^N g(\tau_i\,|\,\theta^{s}_0)$. For every epoch, $m$ iterations in the inner loop are performed. At the $t$-th iteration of the inner loop with $t=0,\dots,m-1$, the parameter vector is updated by
\begin{equation}\label{eq: svrpg update}
    \theta_{t+1}^s = \theta_{t}^s + \eta v_{t}^s\,,
\end{equation}
where $\eta>0$. Then  $B<<N$ trajectories $\left\{ \tau_j \right\}_{j=1}^B$ are collected according to the current policy $\theta^{s}_{t+1}$ and an estimate of the gradient at $\theta^s_{t+1}$ is produced 
\begin{equation*}
    v_{t+1}^{s} = \frac{1}{B}\sum_{j=1}^B g(\tau_j\,|\,\theta_{t+1}^{s}) + v_0^s - \frac{1}{B}\sum_{j=1}^B g^{\omega_{\theta_{t+1}^s}}(\tau_j\,|\,\theta^{s}_0)\,.
\end{equation*}
After $m$ iterations in the inner loop, the snapshot is refreshed by setting $\theta_{0}^{s+1} = \theta_{m}^s $, and the process is repeated for a fixed number of iterations.  See Algorithm~\ref{alg:svrpg} in Section~\ref{sec: appendix A} of the Appendix.\\
\noindent\textbf{Stochastic Recursive Variance-Reduced Policy Gradient (SRVRPG)}~\cite{xu2021} is inspired from the SARAH method for supervised learning~\cite{pmlr-v70-nguyen17b}. Differently from SVRPG, SRVRPG incorporates in the update the concept of momentum, which helps convergence by dampening the oscillations typical of first-order methods. In particular, the estimate produced in the inner iterations for all $t=0,\dots,m-1$ is
\begin{equation}\label{eq: srvrpg}
\begin{aligned}
    v_{t+1}^s = \frac{1}{B}&\sum_{i=1}^B g(\tau_j\,|\,\theta_{t+1}^{s}) + v_{t}^s -\frac{1}{B}\sum_{i=1}^B g^{\omega_{\theta_{t+1}^s}}(\tau_j\,|\,\theta^{s}_{t})\,,
\end{aligned}
\end{equation}
where $\left\{ \tau_i\right\}_{i=1}^B$ are generated according to policy $\theta^s_{t+1}$ and $v_0^s$ is the large batch-size estimate computed at the $s$-th epoch.
See Algorithm~\ref{alg:srvrpg} in Section~\ref{sec: appendix A} of the Appendix.\\
\noindent\textbf{Stochastic Recursive Momentum Policy Gradient (STORM-PG)}~\cite{yuan2020} blends the key components of STORM~\cite{NEURIPS2019_b8002139}, a state-of-the-art variance-reduced gradient estimator for finite-sum problems, with policy gradient algorithms.   
A major drawback of SVRPG and SRVRPG is the restarting mechanism, namely, the alternation between large and small batches of sampled trajectories which ensures control of the variance. As discussed for the finite-sum scenario, the double-loop structure complicates the theoretical analysis and the tuning procedure. STORM-PG circumvents the issue by deploying an exponential moving averaging mechanism that exponentially discounts the accumulated variance. The method only requires one to collect a large batch of trajectories at the first iteration and then relies on small batch updates. Specifically, STORM-PG starts by collecting $N>>1$ trajectory samples $\left\{ \tau_i\right\}_{i=1}^N$ according to an initial policy $\theta_0$. Those samples are deployed to calculate an initial gradient estimate $v_0= \frac{1}{N} \sum_{i=1}^N g(\tau_i\,|\,\theta_0)$, which is used in place of the gradient to update the parameter vector as in Equation~\eqref{eq: gd_iteration}.
Then $T$ iterations are performed where at the $t$-th iteration the parameter vector is updated as described in Equation~\eqref{eq: gd_iteration}, but replacing the gradient with the following estimate
\begin{equation}\label{eq: grad_estimate_stormpg}
\begin{aligned}
 v_{t} =  \frac{1}{B}\sum_{i=1}^B g(\tau_i\,|\,\theta_{t})& + (1-\alpha)\Bigg[ v_{t-1} \\
 &\left.- \frac{1}{B}\sum_{i=1}^B g^{\omega_{\theta_t}}(\tau_i\,|\,\theta_{t-1})\right]\,,
\end{aligned}
\end{equation}
where $\alpha\in(0,1)$ and $\left\{\tau_i\right\}_{i=1}^B$ are generated with policy $\theta_t$.
Notice that if $\alpha=1$, we recover the REINFORCE method, while if $\alpha=0$ we recover the SRVRPG update. See Algorithm~\ref{alg:storm-pg} in Section~\ref{sec: appendix A} of the Appendix. 
\medskip
\section{PAGE-PG}
PAGE~\cite{li2021} is a novel variance-reduced stochastic gradient estimator for Problem~\eqref{eq: finite-sum problem}, where $f$ is differentiable but possibly non-convex. Let $\mathcal{B}_t$ be a set of randomly selected indices without replacement from $\left\{ 1,\dots, n\right\}$ and $\vert \mathcal{B}_t \vert = B<<n$, where the subscript $t$ refers to the iteration.
The PAGE estimator is based on a small adjustment to the mini-batch gradient estimator. Specifically, it is initialized to the full gradient $g_0 = \nabla_{\theta} f(\theta_0)$ at $\theta_0$. For the subsequent iterations, the PAGE estimator is defined as follows
\begin{equation}\label{eq: page estimate}
\begin{aligned}
    &g_{t} = \\
    &\begin{cases}
    \frac{1}{n}\sum\limits_{i=1}^n \nabla f_i(\theta_{t}) & \text{prob.}\,p_t\\
    \frac{1}{B}\!\!\sum\limits_{i\in\mathcal{B}_t}\!\!\nabla f_i(\theta_{t}) + g_{t-1}\! - \frac{1}{B}\!\!\sum\limits_{i\in\mathcal{B}_t}\!\!\nabla f_i(\theta_{t-1})&\text{prob.}\,1-p_t. 
    \end{cases}
\end{aligned}
\end{equation}
This unbiased gradient estimator is used to update the parameter vector in a gradient-descent fashion 
\begin{equation*}
    \theta_{t+1} = \theta_{t} - \eta g_t\,,
\end{equation*}
where $\eta>0$ is a fixed step-size.
Therefore, PAGE is based on switching with probability $p_t$ between gradient descent and a mini-batch version of SARAH~\cite{pmlr-v70-nguyen17b}. 

As suggested by its name, PAGE-PG is designed by blending the key ideas of PAGE with policy gradient methods. As discussed in Section~\ref{sec: related work}, we can not simply use the PAGE estimator for policy gradient in its original formulation but some adjustments are required. 
In particular, we substitute the exact gradient computations with a stochastic estimate based on a large batch-size $N>>B$. To deal with the distribution shift, we deploy importance weighting in a similar fashion as the variance-reduced policy gradient methods discussed in Section~\ref{sec: related work}. PAGE-PG works by initially sampling $N$ trajectories with the initial policy $\theta_0$ and using those samples to build a solid gradient estimate $v_0 = \frac{1}{N}\sum_{i=1}^N g(\tau_i\,|\,\theta_0)$. For any $t>0$, PAGE-PG deploys the following estimate
\begin{equation}\label{eq: pagepg estimate}
\begin{aligned}
    &v_t =\\ 
    &\begin{cases}
    \frac{1}{N}\!\!\sum\limits_{i=1}^N\! g(\tau_i\,|\,\theta_{t}) & \text{prob.}\,p_t\\
    \frac{1}{B}\!\!\sum\limits_{i=1}^B\! g(\tau_i\,|\,\theta_t) \!+\! v_{t-1} \!\!-\!\! \frac{1}{B}\!\!\sum\limits_{i=1}^B\! g^{\omega_{\theta_t}}(\tau_i\,|\,\theta_{t-1})& \text{prob.}\,1-p_t,
    \end{cases}
\end{aligned}
\end{equation}
where $\tau_i$ is drawn according to policy $\theta_t$ for any $i$. The parameter vector is updated according to Equation~\eqref{eq: gd_iteration}, where $v_t$ is deployed in place of the gradient. See Algorithm~\ref{alg:page-pg} in Section~\ref{sec: appendix A} of the Appendix for a pseudo-code description. As it appears in Equation~\eqref{eq: pagepg estimate}, the double loop-structure that characterizes SVRPG and SRVRPG is replaced by a probabilistic switching between two estimators. 

\subsection{Theoretical Analysis}\label{sec: theoretical analysis}
For the convergence analysis, we focus on the GPOMDP estimator, since it is generally preferred over the REINFORCE one because of its better performance. Therefore in this section we use $g(\tau_i\,|\,\theta) = \sum_{h=0}^{H-1} \gamma^h r(s_h^i, a_h^i)Z_{\theta, h}$ and $g^{\omega_{\theta_2}}(\tau_i\,|\theta_1) =\sum_{h=0}^{H-1} \omega_{0:h}(\tau_i\,|\,\theta_2,\theta_1)\gamma^{h}r(s_h^i,a_h^i) Z_{\theta_1, h}$. We refer to Section~\ref{proof main theory} in the Appendix for the proofs and to Section~\ref{technical lemmas} in the Appendix for the technical lemmas.
After discussing the fundamental assumptions, we focus on studying the sample complexity of PAGE-PG to reach an $\epsilon$-stationary solution. We further show that, when the objective is gradient-dominated, since $\epsilon$-stationarity translates into near-optimality, the derived results are also valid for near-optimal solutions.

The theoretical analysis of variance-reduced policy gradient methods generally focuses on deriving, under certain assumptions, an upper bound on the number of sampled trajectories that are needed to achieve an $\epsilon$-stationary solution.
\begin{definition}[$\epsilon$-stationary solution]\label{definition}
Let $\epsilon>0$. $\theta \in \mathbb{R}^d$ is an $\epsilon$-stationary solution if and only if 
$\Vert \nabla_{\theta} V(\theta) \Vert \leq \epsilon\,. $
\end{definition}
Based on Definition~\ref{definition}, a stochastic policy gradient based algorithm reaches an $\epsilon$-stationary solution if and only if
$\mathbb{E}\left[ \Vert \nabla_{\theta}V({\theta_{\text{out}}}) \Vert^2 \right]\leq \epsilon^2$,
where ${\theta_{\text{out}}}$ is the output of the algorithm after $T$ iterations and the expected value is taken with respect to all the sources of randomness involved in the process. 
Our analysis is based on the following assumptions. 
\begin{assumption}[Bounded log-policy gradient norm]\label{assumption 1}
For any $a\in\mathcal{A}$ and $s\in\mathcal{S}$ there exists a constant $G>0$ such that
$\Vert \nabla_{\theta}\log \pi_{\theta}(a\,|\,s) \Vert \leq G$ for all $\theta\in\mathbb{R}^d$.
\end{assumption}

\begin{assumption}[Smoothness]\label{assumption 2}
$\pi_{\theta}$ is twice differentiable and for any $a\in\mathcal{A}$ and $s\in\mathcal{S}$ there exists a constant $M>0$ such that
$\Vert \nabla^2_{\theta}\log\pi_{\theta}(a\,|\,s) \Vert \leq M$ for all $\theta\in\mathbb{R}^d$.
\end{assumption}

\begin{assumption}[Finite variance]\label{assumption 3}
There exists a constant $\sigma>0$ such that
$ \text{Var}(g(\tau\,|\,\theta))\leq \sigma^2$ for all $\theta\in\mathbb{R}^d$.
\end{assumption}

\begin{assumption}[Finite importance weight variance]\label{ass:finite_IS_var}
For any policy pair $\theta_a, \theta_b\in\mathbb{R}^d$ and with $\tau\sim p(\cdot\,|\,\theta_b)$, the importance weight $\omega(\tau\,|\,\theta_b,\theta_a) = \frac{p(\tau\,|\,\theta_a)}{p(\tau\,|\,\theta_b)}$ is well-defined.
In addition, there exists a constant $W>0$ such that
$\text{Var}(\omega(\tau\,|\theta_b, \theta_a\,)) \leq W$.
\end{assumption}
The same set of assumptions is considered in~\cite{papini2018, xu2019, xu2021, yuan2020}. By analyzing PAGE-PG in the same setting as its counterparts, we are able to compare them from a theoretical viewpoint. 

\begin{remark}\label{remark on assumptions}
While the non-convex optimization community agrees on the rationality of Assumptions~\ref{assumption 1}-\ref{assumption 3}, in~\cite{Wang2021} the authors argue that Assumption~\ref{ass:finite_IS_var} on the boundedness of the importance weight variance is uncheckable and very stringent. In a more limited setting (finite MDPs only) than the one considered in this work, they are able to remove such assumption via the introduction of a gradient-truncation mechanism that provably controls the variance of the importance weights in off-policy
sampling. This approach is for now outside the scope of this work, but can be addressed in future work by adopting a trust region policy optimisation perspective. In practice, to ensure that Assumption~\ref{ass:finite_IS_var} is met, one can resort to small step-sizes so that $p(\tau\,|\,\theta_b)\approx p(\tau\,|\,\theta_a)$ and the weight is bounded. This, however, comes at the cost of slower convergence, as also confirmed by our numerical experiments in Section~\ref{sec: numerical evaluation}.
\end{remark}

For completeness, we report the following fundamental proposition from~\cite{xu2021}, which is used consistently in our proofs.
\begin{proposition}\label{proposition}
Let $\tau_i$ be a realization of $\tau\sim p(\cdot\,|\,\theta_1)$. Under Assumptions~\ref{assumption 1}-\ref{assumption 2}:
\begin{enumerate}
    \item $\Vert g(\tau_i\,|\,\theta_1) - g(\tau_i\,|\,\theta_2) \Vert\leq L \Vert \theta_1 - \theta_2 \Vert$ for all $\theta_1, \theta_2\in\mathbb{R}^d$, where $L \vcentcolon = MR/(1-\gamma)^2 + 2G^2 R/(1-\gamma)^3$, 
    \item $V(\theta)$ is $L$-smooth and twice differentiable, i.e. $\Vert \nabla_{\theta}^2 V(\theta) \Vert\leq L$.
    \item $\Vert g(\tau_i\,|\,\theta) \Vert \leq C_g$ for all $\theta\in\mathbb{R}^d$ and $C_g \vcentcolon = GR/(1-\gamma)^2$.
\end{enumerate}
\end{proposition}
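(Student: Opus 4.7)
The plan is to handle the three items in sequence: item 3 supplies the pointwise bound on $g$, item 1 supplies a Lipschitz bound via the Jacobian of $g$, and item 2 lifts item 1 to $\nabla V$ while correctly accounting for the fact that the sampling measure itself depends on $\theta$. Throughout I would exploit $Z_{\theta,h}=\sum_{z=0}^{h}\nabla_{\theta}\log\pi_{\theta}(a_z^i\,|\,s_z^i)$ together with $\|Z_{\theta,h}\|\leq(h+1)G$ (Assumption \ref{assumption 1}) and $|r|\leq R$.

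For item 3, I would expand the GPOMDP estimator and apply the triangle inequality,
\begin{equation*}
\|g(\tau_i\,|\,\theta)\|\leq R\sum_{h=0}^{H-1}\gamma^h(h+1)G\leq GR\sum_{h=0}^{\infty}(h+1)\gamma^h=\frac{GR}{(1-\gamma)^2}=C_g,
\end{equation*}
using the closed form $\sum_{h=0}^{\infty}(h+1)\gamma^h=(1-\gamma)^{-2}$. For item 1, I would show Lipschitzness of $g(\tau_i\,|\,\cdot)$ by a uniform bound on its Jacobian: differentiating sample-wise (the trajectory $\tau_i$ is held fixed) gives $\nabla_\theta g(\tau_i\,|\,\theta)=\sum_{h=0}^{H-1}\gamma^h r(s_h^i,a_h^i)\sum_{z=0}^{h}\nabla_\theta^2\log\pi_\theta(a_z^i\,|\,s_z^i)$, and Assumption \ref{assumption 2} yields $\|\nabla_\theta g(\tau_i\,|\,\theta)\|\leq MR/(1-\gamma)^2\leq L$. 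The mean value theorem then delivers the claimed Lipschitz bound with constant $L$.

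The main obstacle is item 2, because $\nabla V(\theta)=\mathbb{E}_{\tau\sim p(\cdot\,|\,\theta)}[g(\tau\,|\,\theta)]$ has both the integrand and the sampling measure depending on $\theta$, and a naive use of the full-trajectory score $\nabla_\theta\log p(\tau\,|\,\theta)$ yields an $H$-dependent (potentially unbounded) constant that spoils the $\gamma$-discounted scaling. Instead, I would exploit the reward-to-go structure of GPOMDP: for each $h$, the summand $\gamma^h\mathbb{E}[r(s_h,a_h)Z_{\theta,h}]$ depends only on the trajectory prefix $\tau_{0:h}$, whose log-density score under $p(\tau_{0:h}\,|\,\theta)$ equals exactly $Z_{\theta,h}$ (the transition and initial-state factors being $\theta$-independent). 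Applying the log-derivative trick term-by-term and swapping sum and expectation, I obtain
\begin{equation*}
\nabla^2 V(\theta)=\sum_{h=0}^{H-1}\gamma^h\,\mathbb{E}\!\left[r(s_h,a_h)\sum_{z=0}^{h}\nabla_\theta^2\log\pi_\theta(a_z\,|\,s_z)+r(s_h,a_h)Z_{\theta,h}Z_{\theta,h}^\top\right].
\end{equation*}
The first summand has operator norm bounded, as in item 1, by $MR/(1-\gamma)^2$, while the second is bounded by $RG^2\sum_{h=0}^{\infty}(h+1)^2\gamma^h=RG^2(1+\gamma)/(1-\gamma)^3\leq 2G^2R/(1-\gamma)^3$, using the closed form $\sum_{h=0}^{\infty}(h+1)^2\gamma^h=(1+\gamma)/(1-\gamma)^3$. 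Summing the two contributions yields $\|\nabla^2 V(\theta)\|\leq L$, hence $L$-smoothness. The remaining technicalities — justifying the interchange of differentiation and integration via dominated convergence (guaranteed by the uniform bounds of items 1 and 3) and noting that twice-differentiability of $\pi_\theta$ propagates to $V$ — are routine.
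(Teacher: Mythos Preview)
The paper does not supply its own proof of this proposition: it is quoted verbatim as ``the following fundamental proposition from~\cite{xu2021}'' and, where item~1 is invoked (in the proof of Lemma~\ref{lemma 0}), the reader is again referred to ``Proposition~4.2 in~\cite{xu2021} for a detailed proof.'' So there is nothing in the paper to compare your argument against beyond the bare statement.

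Your argument itself is correct and self-contained. A couple of remarks. First, your Jacobian bound in item~1 in fact delivers the \emph{tighter} constant $MR/(1-\gamma)^2$, and you correctly note that this is $\le L$; the extra $2G^2R/(1-\gamma)^3$ only enters in item~2, through the $Z_{\theta,h}Z_{\theta,h}^\top$ term that arises because the sampling measure depends on $\theta$. Second, your ``prefix trick'' for item~2 --- differentiating each summand $\gamma^h\,\mathbb{E}_{p(\tau_{0:h}\,|\,\theta)}[r(s_h,a_h)Z_{\theta,h}]$ separately and using that the score of the truncated trajectory is exactly $Z_{\theta,h}$ --- is precisely what is needed to avoid an $H$-dependent constant, and the resulting Hessian expression and the series evaluation $\sum_{h\ge0}(h+1)^2\gamma^h=(1+\gamma)/(1-\gamma)^3\le 2/(1-\gamma)^3$ are both right. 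The only step you gloss over is the vector-valued mean-value inequality in item~1; writing it as $g(\tau_i\,|\,\theta_1)-g(\tau_i\,|\,\theta_2)=\int_0^1 \nabla_\theta g(\tau_i\,|\,\theta_2+t(\theta_1-\theta_2))(\theta_1-\theta_2)\,dt$ makes the bound rigorous.
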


\begin{theorem}
\label{theorem 1}
Suppose that Assumptions~\ref{assumption 1}-\ref{ass:finite_IS_var} hold and select $\eta>0$, $p\in(0,1]$ and $B\in\mathbb{N}$ such that $\eta^2\leq \min\left\{p/(1-p)\cdot B/2C, 1/4L^2\right\}$. The average expected squared gradient norm after $T$ iterations of PAGE-PG satisfies
\begin{equation*}
    \frac{1}{T}\sum_{t=0}^{T-1} \mathbb{E}\left[ \Vert \nabla_{\theta}V(\theta_t) \Vert^2 \right] \leq \frac{2\left(V^* - V(\theta_0)\right)}{\eta T} + \frac{\sigma^2}{N} + \frac{\sigma^2}{pN T}\,.
\end{equation*}

\end{theorem}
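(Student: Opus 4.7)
The plan is to combine three ingredients standard in variance-reduced nonconvex optimization: a descent-type inequality from the $L$-smoothness of $V$, a one-step recursion for the estimator error $\delta_t := v_t - \nabla_\theta V(\theta_t)$ that exploits the probabilistic switch in~\eqref{eq: pagepg estimate}, and a Lyapunov potential that couples the two inequalities and telescopes cleanly.

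First, Proposition~\ref{proposition} gives that $V$ is $L$-smooth, so the update $\theta_{t+1} = \theta_t + \eta v_t$ yields
\[
V(\theta_{t+1}) \;\geq\; V(\theta_t) + \eta\langle \nabla_\theta V(\theta_t), v_t\rangle - \tfrac{L\eta^2}{2}\|v_t\|^2.
\]
Rewriting the inner product through $2\langle a,b\rangle = \|a\|^2 + \|b\|^2 - \|a-b\|^2$ with $a=\nabla_\theta V(\theta_t)$, $b=v_t$ produces
\[
V(\theta_{t+1}) \;\geq\; V(\theta_t) + \tfrac{\eta}{2}\|\nabla_\theta V(\theta_t)\|^2 - \tfrac{\eta}{2}\|\delta_t\|^2 + \tfrac{\eta(1-L\eta)}{2}\|v_t\|^2.
\]
I would deliberately keep the $\|v_t\|^2$ term rather than discard it: it is precisely what will absorb the ``momentum'' contribution produced by the variance recursion.

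Second, I would derive a recursion on $\mathbb{E}\|\delta_{t+1}\|^2$ by conditioning on the switch in~\eqref{eq: pagepg estimate}. With probability $p$, $v_{t+1}$ is a fresh $N$-sample estimator, so Assumption~\ref{assumption 3} gives $\mathbb{E}[\|\delta_{t+1}\|^2\mid\mathcal{F}_t]\leq \sigma^2/N$. With probability $1-p$, the unbiasedness property $\mathbb{E}_{\tau\sim p(\cdot\,|\,\theta_{t+1})}[g^{\omega_{\theta_{t+1}}}(\tau\,|\,\theta_{t})] = \nabla_\theta V(\theta_t)$ recalled in Section~\ref{sec: related work} gives $\delta_{t+1} = \delta_t + X_{t+1}$ with $\mathbb{E}[X_{t+1}\mid\mathcal{F}_t]=0$, hence $\mathbb{E}[\|\delta_{t+1}\|^2\mid\mathcal{F}_t] = \|\delta_t\|^2 + \mathbb{E}[\|X_{t+1}\|^2\mid\mathcal{F}_t]$. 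A technical lemma (drawing on Assumptions~\ref{assumption 1},~\ref{assumption 2} and~\ref{ass:finite_IS_var}) bounds $\mathbb{E}[\|X_{t+1}\|^2\mid\mathcal{F}_t]\leq (C/B)\|\theta_{t+1}-\theta_t\|^2 = (C\eta^2/B)\|v_t\|^2$, with $C$ the very constant appearing in the step-size hypothesis. Combining the two branches and taking full expectations,
\[
\mathbb{E}\|\delta_{t+1}\|^2 \;\leq\; (1-p)\,\mathbb{E}\|\delta_t\|^2 + (1-p)\tfrac{C\eta^2}{B}\mathbb{E}\|v_t\|^2 + p\tfrac{\sigma^2}{N}.
\]

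Third, I would introduce the Lyapunov function $\Psi_t := V^* - V(\theta_t) + c\,\|\delta_t\|^2$ with $c = \eta/(2p)$. Taking expectations of the negated descent inequality and adding $c$ times the variance recursion makes the coefficients of $\mathbb{E}\|\delta_t\|^2$ cancel exactly, while the coefficient of $\mathbb{E}\|v_t\|^2$ becomes $-\tfrac{\eta(1-L\eta)}{2} + \tfrac{(1-p)C\eta^3}{2pB}$; the two step-size conditions $\eta^2 \leq pB/[2(1-p)C]$ and $\eta \leq 1/(2L)$ render this non-positive, so the term can be discarded. The result is the one-step contraction
\[
\mathbb{E}\Psi_{t+1} \;\leq\; \mathbb{E}\Psi_t - \tfrac{\eta}{2}\mathbb{E}\|\nabla_\theta V(\theta_t)\|^2 + \tfrac{\eta\sigma^2}{2N}.
\]
Telescoping from $t=0$ to $T-1$, using $\mathbb{E}\Psi_T\geq 0$ and the initialization bound $\mathbb{E}\|\delta_0\|^2\leq \sigma^2/N$ (the first iterate is built from the large batch of size $N$), and dividing by $\eta T/2$, recovers the stated bound.

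The main obstacle is the variance-recursion step, specifically pinning down the constant $C$. Showing $\mathbb{E}[\|X_{t+1}\|^2\mid\mathcal{F}_t] \leq (C/B)\|\theta_{t+1}-\theta_t\|^2$ requires a careful control of the second moment of the IS-corrected difference $g(\tau\,|\,\theta_{t+1}) - g^{\omega_{\theta_{t+1}}}(\tau\,|\,\theta_t)$ in terms of parameter displacement, using the log-policy Hessian bound of Assumption~\ref{assumption 2}, the per-trajectory gradient bound $C_g$ from Proposition~\ref{proposition}, and crucially the importance-weight variance bound of Assumption~\ref{ass:finite_IS_var}, through a fairly intricate decomposition of the GPOMDP sum. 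Once this constant is in hand, the remainder of the argument is essentially bookkeeping.
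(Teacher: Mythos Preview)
Your proposal is correct and follows essentially the same route as the paper: the same $L$-smoothness descent inequality (retaining the $\|v_t\|^2$ term), the same one-step variance recursion for $\mathbb{E}\|\delta_{t+1}\|^2$ (this is exactly the paper's Lemma~\ref{lemma 1}, with the constant $C$ coming from Lemma~\ref{lemma 0}), and the same use of the two step-size conditions to make the $\|v_t\|^2$ coefficient non-positive. The only difference is organizational: you couple the two inequalities through the Lyapunov potential $\Psi_t = V^* - V(\theta_t) + \tfrac{\eta}{2p}\|\delta_t\|^2$ and telescope once, whereas the paper first telescopes the variance recursion on its own (Lemma~\ref{lemma 2}) to bound $\sum_t \mathbb{E}\|\delta_t\|^2$ and then substitutes into the summed descent inequality; both packagings produce the identical final bound with identical constants.
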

See Lemma~\ref{lemma 0} in Section~\ref{technical lemmas} of the Appendix for the definition of $C$.
Theorem~\ref{theorem 1} states that under a proper choice of step-size, batch-size and probability of switching,
the average expected squared gradient norm of the performance function after $T$ iterations of PAGE-PG is in the
order of
$\mathcal{O}\left(\frac{1}{T} + \frac{1}{NT} + \frac{1}{N} \right)$.
The first term $\mathcal{O}\left( \frac{1}{T} \right)$ characterizes the convergence of PAGE-PG, while the second and third terms come from the variance of the stochastic gradient estimate computed at the iterations with large batches. Our convergence rate improves over the rate $\mathcal{O}\left(\frac{1}{T} + \frac{1}{B} + \frac{1}{N} \right)$ of SVRPG~\cite{papini2018} and over the rate  $\mathcal{O}\left(\frac{1}{T} + \frac{1}{B} + \frac{1}{TN} \right)$ of STORM-PG~\cite{yuan2020}, by avoiding the dependency on the small batch-size $B$. Compared to the rate of SRVR-PG $\mathcal{O}\left( \frac{1}{T} + \frac{1}{N} \right)$, our analysis leads to an extra $\mathcal{O}\left(\frac{1}{TN} \right)$ term which arises from the variance of the first gradient estimate. By selecting $p= \frac{B}{N}$ and $B = \mathcal{O}\left(1 \right)$, we recover the rate $\mathcal{O}\left( \frac{1}{T} + \frac{1}{N}\right)$.

\begin{corollary}\label{corollary 1}
Under the conditions of Theorem~\ref{theorem 1}, set $\eta = \sqrt{B}/\sqrt{2CN}$, $p=1/N$, $N = \mathcal{O}\left( \epsilon^{-2}\right)$ and $B = \mathcal{O}\left( 1\right)$. Then $\frac{1}{T}\sum_{t=0}^{T-1}\mathbb{E}\left[\Big\Vert  \nabla_{\theta}V(\theta_t) \Big\Vert ^2\right]\leq \epsilon^2$ within $\mathcal{O}\left( \epsilon^{-3}\right)$ trajectories on average with $\epsilon\rightarrow 0$.
\end{corollary}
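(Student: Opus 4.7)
The plan is to obtain the corollary as a direct specialization of Theorem~\ref{theorem 1}, so the work reduces to (i) verifying that the prescribed $(\eta, p, B, N)$ satisfy the step-size condition, (ii) balancing the three terms of the bound by a suitable choice of $T$, and (iii) converting the iteration count into an expected trajectory count.

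First I would check feasibility. With $p = 1/N$ we have $p/(1-p) = 1/(N-1)$, so the constraint $\eta^2 \le p/(1-p)\cdot B/(2C)$ becomes $\eta^2 \le B/(2C(N-1))$. The proposed $\eta^2 = B/(2CN)$ satisfies this for every $N\ge 1$, and the remaining condition $\eta^2 \le 1/(4L^2)$ is automatic for $\epsilon$ small enough since $\eta^2 = \Theta(\epsilon^{2})$ under $N = \Theta(\epsilon^{-2})$ and $B = \Theta(1)$. Once feasibility is established, I would plug the chosen parameters into the bound of Theorem~\ref{theorem 1}. The term $\sigma^2/N$ becomes $O(\epsilon^2)$ by the choice of $N$. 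The term $\sigma^2/(p N T) = \sigma^2/T$ is $O(\epsilon^2)$ as soon as $T = \Omega(\epsilon^{-2})$. The leading term $2(V^* - V(\theta_0))/(\eta T)$ equals $2(V^* - V(\theta_0))\sqrt{2CN/B}/T = \Theta(\epsilon^{-1}/T)$, which is $O(\epsilon^2)$ as soon as $T = \Omega(\epsilon^{-3})$. Taking $T = \Theta(\epsilon^{-3})$ therefore makes all three terms $O(\epsilon^2)$, yielding the desired averaged stationarity bound.

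Next I would translate the iteration count $T$ into an expected trajectory count. At iteration $t\ge 1$, PAGE-PG draws $N$ trajectories with probability $p$ and $B$ trajectories with probability $1-p$, so the expected per-iteration sample cost is $pN + (1-p)B = 1 + (1 - 1/N)B = O(1)$ under $p = 1/N$ and $B = O(1)$. The initialization contributes an extra $N = O(\epsilon^{-2})$ trajectories to build $v_0$. Hence the total expected number of trajectories is $N + T\cdot O(1) = O(\epsilon^{-2}) + O(\epsilon^{-3}) = O(\epsilon^{-3})$ as $\epsilon \to 0$.

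The entire argument is algebraic once Theorem~\ref{theorem 1} is in hand, so the only delicate point is the simultaneous balancing: the probabilistic-switch choice $p = 1/N$ is what turns $\sigma^2/(pNT)$ into $\sigma^2/T$ (removing the dependence on $N$ there) and what makes the step-size constraint saturated exactly at $\eta = \Theta(\sqrt{B/(CN)})$, and this coupling is what allows the bottleneck term $1/(\eta T)$ to reach $O(\epsilon^2)$ with only $T = O(\epsilon^{-3})$ iterations while keeping the per-step expected cost constant. I expect the cleanest presentation to be to first fix $N$ to kill $\sigma^2/N$, then pick $T$ to kill the remaining two terms, and finally tally the cost; no further technical lemmas should be required.
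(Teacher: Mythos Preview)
Your proposal is correct and follows essentially the same approach as the paper: verify the step-size constraint with the chosen $(p,\eta)$, balance the three terms in the bound of Theorem~\ref{theorem 1} to determine $T=\Theta(\epsilon^{-3})$, and multiply by the $O(1)$ expected per-iteration trajectory cost. The only cosmetic differences are that the paper splits $\epsilon^2$ explicitly into equal thirds rather than using big-$O$ balancing, and it checks $\eta^2\le 1/(4L^2)$ non-asymptotically via $C\ge 2L^2$ instead of invoking $\eta^2=\Theta(\epsilon^2)$.
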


Under the considered assumptions, REINFORCE-type methods, including REINFORCE, GPOMDP as well as their variants with baselines, need $\mathcal{O}(\epsilon^{-4})$ samples to achieve an $\epsilon$-stationary solution. By incorporating stochastic variance reduction techniques the complexity can be reduced. In particular, SVRPG achieves an $\mathcal{O}(\epsilon^{-10/3})$ sample complexity~\cite{xu2019} while the more sophisticated SRVRPG~\cite{xu2021} and STORM-PG~\cite{yuan2020} achieve an $\mathcal{O}(\epsilon^{-3})$ sample complexity. According to Corollary~\ref{corollary 1}, PAGE-PG needs on average $\mathcal{O}\left( \epsilon^{-3}\right)$ trajectories to achieve an $\epsilon$-stationary solution, which makes it competitive from a theoretical viewpoint with its state-of-the-art counterparts. The discussed results on sample complexity are summarized in Table~\ref{table: sample-complexity}.
\begin{table}[t]
\caption{Sample complexities of comparable algorithms for finding an $\epsilon$-stationary solution.}
\label{table: sample-complexity}
\vskip 0.05in
\begin{center}
\begin{small}
\begin{sc}
\begin{tabular}{lcccr}
\toprule
\textbf{Method} & \textbf{Sample-Complexity} & \textbf{No-Restart} \\
\midrule
REINFORCE    & $\mathcal{O}\left( \epsilon^{-4} \right)$ & \xmark\\
GPOMDP    & $\mathcal{O}\left( \epsilon^{-4} \right)$ & \xmark\\
SVRPG    & $\mathcal{O}\left( \epsilon^{-10/3} \right)$ & \xmark\\
SRVRPG & $\mathcal{O}\left( \epsilon^{-3} \right)$ & \xmark\\
STORM-PG    & $\mathcal{O}\left( \epsilon^{-3} \right)$ & \cmark \\
PAGE-PG   & $\mathcal{O}\left( \epsilon^{-3} \right)$ & \cmark \\
\bottomrule
\end{tabular}
\end{sc}
\end{small}
\end{center}
\vskip -0.1in
\end{table}

Recent works~\cite{agarwal2019, pmlr-v130-bhandari21a} have shown that, despite its non-concavity, with certain policy parametrizations, such as direct and softmax, the objective of Problem~\eqref{eq: main_problem} is gradient dominated. We complete our theoretical analysis by extending the results of Theorem~\ref{theorem 1} to gradient-dominated objectives. 
\begin{assumption}[Gradient dominancy]\label{ass:gradient dominancy}
There exists a constant $\lambda>0$ such that
$V^* - V(\theta) \leq \lambda \big\Vert \nabla_{\theta}V(\theta) \big\Vert^2$ for all $\theta\in\mathbb{R}^d\,.$
\end{assumption}

\begin{corollary}\label{corollary 2}
Consider the same setting as in Theorem~\ref{theorem 1} where also Assumption~\ref{ass:gradient dominancy} holds. Then,
\begin{equation*}
    V^* - \max_{t\leq T}\mathbb{E}\left[ V(\theta_t)  \right]\leq \frac{2\lambda\left(V^* - V(\theta_0) \right)}{\eta T} + \frac{\sigma^2\lambda}{N} + \frac{\sigma^2\lambda}{pNT}\,. 
\end{equation*}
\end{corollary}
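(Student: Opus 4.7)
The plan is to combine Assumption~\ref{ass:gradient dominancy} with the bound from Theorem~\ref{theorem 1} via a ``min is at most the average'' argument. The key observation is that gradient dominancy holds pointwise in $\theta$, so it can be applied to each iterate $\theta_t$ and then averaged.

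First I would apply Assumption~\ref{ass:gradient dominancy} to each iterate $\theta_t$ to get $V^* - V(\theta_t) \leq \lambda\,\|\nabla_{\theta} V(\theta_t)\|^2$, and then take expectation on both sides (with respect to all randomness of the algorithm up to iteration $t$) to obtain $V^* - \mathbb{E}[V(\theta_t)] \leq \lambda\,\mathbb{E}[\|\nabla_{\theta} V(\theta_t)\|^2]$. Summing over $t = 0, \dots, T-1$ and dividing by $T$ yields
\begin{equation*}
\frac{1}{T}\sum_{t=0}^{T-1} \bigl( V^* - \mathbb{E}[V(\theta_t)] \bigr) \leq \lambda \cdot \frac{1}{T}\sum_{t=0}^{T-1} \mathbb{E}\bigl[\|\nabla_{\theta} V(\theta_t)\|^2\bigr].
\end{equation*}

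Next, I would use the elementary inequality that the minimum of a finite set of real numbers is bounded above by their average, namely
\begin{equation*}
\min_{0 \leq t \leq T-1} \bigl( V^* - \mathbb{E}[V(\theta_t)] \bigr) \leq \frac{1}{T}\sum_{t=0}^{T-1} \bigl( V^* - \mathbb{E}[V(\theta_t)] \bigr),
\end{equation*}
and rewrite the left-hand side as $V^* - \max_{t \leq T} \mathbb{E}[V(\theta_t)]$ (noting that $V^*$ is a constant and the max is achieved over the same index range). Finally, I would substitute the bound from Theorem~\ref{theorem 1} on the average expected squared gradient norm, multiplying it through by $\lambda$, to conclude
\begin{equation*}
V^* - \max_{t \leq T} \mathbb{E}[V(\theta_t)] \leq \frac{2\lambda\bigl(V^* - V(\theta_0)\bigr)}{\eta T} + \frac{\sigma^2 \lambda}{N} + \frac{\sigma^2 \lambda}{pNT}.
\end{equation*}

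There is essentially no technical obstacle here: all the work was done in Theorem~\ref{theorem 1}, and the corollary is a straightforward consequence of the pointwise nature of Assumption~\ref{ass:gradient dominancy} and the linearity of expectation. The only minor subtlety is making sure that applying the gradient-dominancy inequality sample-path-wise before taking expectation is valid, which it is since the inequality holds for every $\theta \in \mathbb{R}^d$ deterministically.
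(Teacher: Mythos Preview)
Your proposal is correct and follows essentially the same approach as the paper. The only cosmetic difference is the order of operations: the paper first picks the index $a := \arg\min_{t} \mathbb{E}[\|\nabla_\theta V(\theta_t)\|^2]$, applies gradient dominancy at $\theta_a$, and then bounds $\mathbb{E}[\|\nabla_\theta V(\theta_a)\|^2]$ by the average, whereas you apply gradient dominancy to every iterate first and then invoke ``min $\leq$ average'' on the suboptimality side---both routes are equivalent and rely on the same two ingredients (pointwise gradient dominancy plus Theorem~\ref{theorem 1}).
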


The gradient dominancy condition implies that any stationary policy is also globally optimal. Consequently, as formalized in Corollary~\ref{corollary 2}, the results from Theorem~\ref{theorem 1} are valid for near-optimal policies, with the only difference that in this case the upper bound on the suboptimality is also proportional to the gradient dominancy constant.  
\section{Numerical Evaluation}\label{sec: numerical evaluation}
In this section we numerically evaluate the performance of the discussed variance-reduced policy gradient methods on two state-of-the-art model-free reinforcement learning tasks from OpenAI Gym~\cite{brockman2016openai}.
In order to conduct the numerical evaluation of the discussed methods, we implemented them, along with GPOMDP, in a Pytorch-based toolbox. In addition, the toolbox interfaces OpenAI Gym~\cite{brockman2016openai} allowing the user to easily train RL agents on different environments with the discussed methods. Finally, Pytorch~\cite{NEURIPS2019_9015} provides the possibility of speeding up the computation via the deployment of graphical processing units (GPUs). The toolbox is publicly available at \url{https://gitlab.ethz.ch/gmatilde/vr_reinforce}. 

\subsection{Benchmarks}
For the empirical evaluation of the discussed methods we consider the Acrobot and the Cartpole environments from OpenAI Gym.
\\
\noindent\textbf{Acrobot.}
The Acrobot system comprises two joints and two links, where the joint between the two links is actuated. Initially, the links are hanging downwards, and the goal is to swing the end of the lower link up to a given height. A reward of $-1$ is emitted every time the goal is not achieved. As soon as the target height is reached or $500$ time-steps are elapsed, the episode ends. The state space is continuous with dimension $6$. The action space is discrete and $3$ possible actions can be selected: apply a positive torque, apply a negative torque, do nothing. To model the policy, we use a neural softmax parametrization. In particular, we deploy a neural network with two hidden layers, width 32 for both layers and $\text{Tanh}$ as activation function.

\noindent\textbf{Cartpole.}
The Cartpole system is a classical control environment that comprises a pole attached by an un-actuated joint to a cart that moves along a frictionless track.
The pendulum starts upright, and the goal is to prevent it from falling over. A reward of $+1$ is provided for every time-step that the pole remains within 15 degrees from the upright position. The episode ends when the pole is more than 15 degrees from vertical, or the cart moves more than 2.4 units from its initial position.
The state space is continuous with dimension $4$. The action space is discrete with 2 available actions: apply a force of $+1$ or $-1$ to the cart. As for the Acrobot, to model the policy we use a neural softmax parametrization. In particular, we deploy a neural network with two hidden layers, width 32 for both layers and $\text{Tanh}$ as activation function.
The maximum episode length is set to 200.

We set $N=100$, $B=5$ and $m=10$ and $\gamma=0.9999$, while the step-size and the other hyperparameters are tuned for each individual algorithm using grid search. See Section~\ref{details on benchmarks} in the Appendix for more details on the choice of the hyperparameters. For each algorithm, we run the
experiment 5 times with random initialization of the environments. The curves (solid-lines) are obtained by taking the mean over the independent runs and the shaded areas represent the $\pm \sigma$ standard deviations. 

The experiments in Figures~\ref{fig: acrobot} and~\ref{fig: cartpole} show that, given enough
episodes, all of the algorithms are able to solve the tasks, achieving near-optimal returns. For the Acrobot environment in Figure~\ref{fig: acrobot}, the SRVRPG and GPOMDP algorithms take the biggest number of episodes to find an optimal policy, while STORM-PG and SVRPG are the fastest in terms of number of episodes. This might be due to the step-size, which, for certain methods, needs to be set to particularly small values to enforce finite importance weight variance and ensure convergence. For the Cartpole environment in Figure~\ref{fig: cartpole}, as
expected, the GPOMDP algorithm takes the longest to find the optimal policy, followed in order by SVRPG, SRVRPG, STORM-PG and PAGE-PG. Notice that these empirical observations corroborate the theoretical findings on the sample complexity. Finally, our benchmarks demonstrate the competitive performance of PAGE-PG with respect to its counterparts.

\begin{figure}[ht]
\vskip 0.01in
\begin{center}
\centerline{\includegraphics[width=\columnwidth]{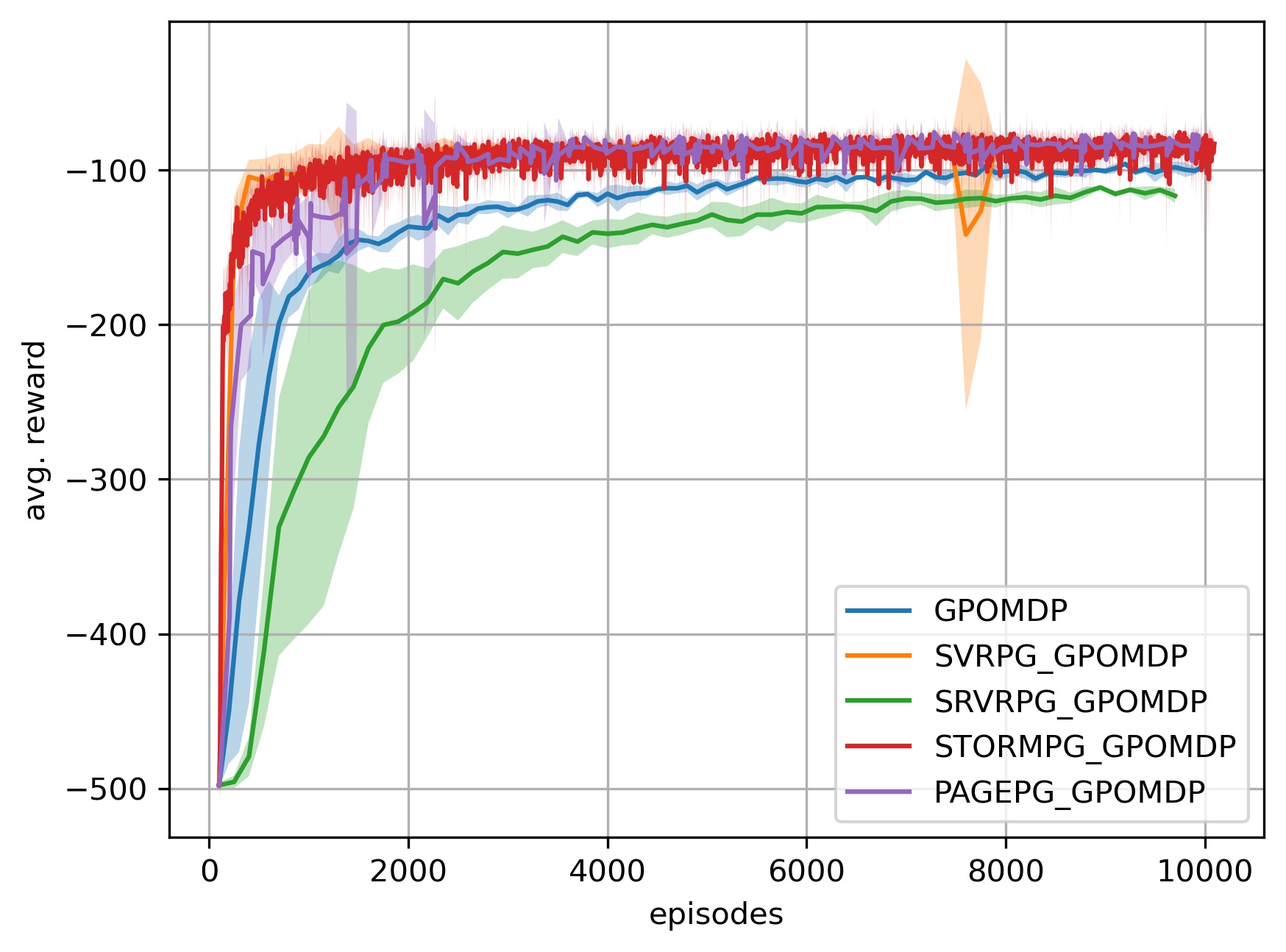}}
\caption{Average reward versus number of episodes for GPOMDP (blue), SVRPG (orange), SRVRPG (green), STORM-PG (red) and PAGE-PG (light purple) on the Acrobot environment. The solid line represents the mean and the shaded areas are calculated as the $\pm\sigma$ of the outcomes over 5 independent runs.}
\label{fig: acrobot}
\end{center}
\vskip -0.2in
\end{figure}

\begin{figure}[ht]
\vskip 0.01in
\begin{center}
\centerline{\includegraphics[width=\columnwidth]{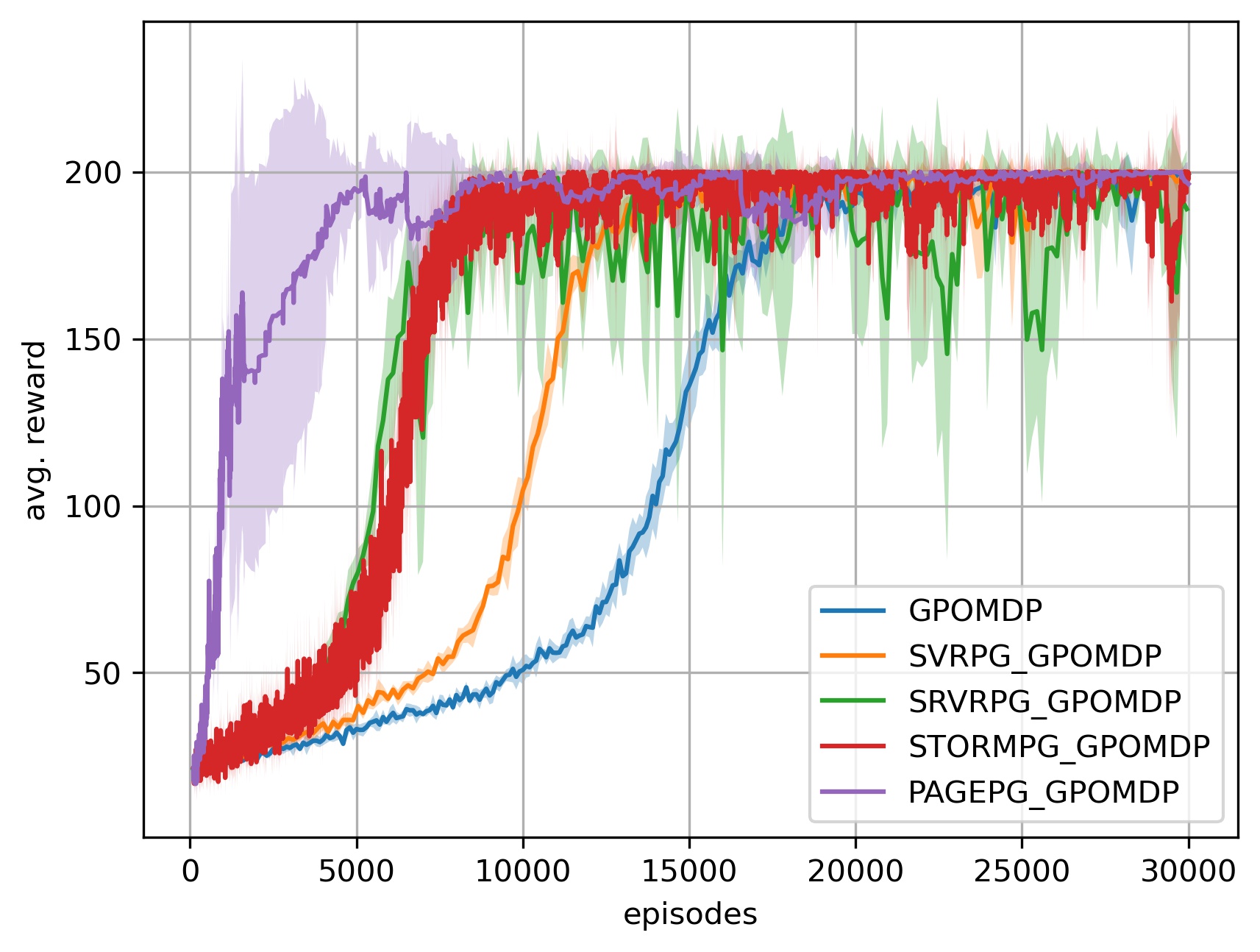}}
\caption{Average reward versus number of episodes for GPOMDP (blue), SVRPG (orange), SRVRPG (green), STORM-PG (red) and PAGE-PG (light purple) on the Cartpole environment. The solid line represents the mean and the shaded areas are calculated as the $\pm\sigma$ of the outcomes over 5 independent runs.}
\label{fig: cartpole}
\end{center}
\vskip -0.2in
\end{figure}

\section{Conclusions, Limitations \& Future Works}
After a brief survey on the main variance reduced policy gradient methods based on REINFORCE-type algorithms, we formulate a novel variance-reduced extension, PAGE-PG, inspired from the PAGE gradient estimator for optimization of non-convex finite-sum problems. To the best of the authors' knowledge, our method is the first variance-reduced policy gradient method that replaces the outer loop with a probabilistic switch. This key feature of PAGE-PG facilitates the theoretical analysis while preserving a fast theoretical rate and a low sample complexity. In addition, our numerical evaluation shows that PAGE-PG has a competitive performance with respect to its counterparts. 

Our benchmarks and theoretical results on the sample complexity confirm that variance-reduced techniques successfully manage to reduce the sample complexity of REINFORCE-type algorithms, speeding up the convergence in terms of number of sampled trajectories. At the same time, it is possible to identify the following limitations: 
\\
\noindent\textbf{Unrealistic and uncheckable assumption on importance weight variance.} As underlined in Remark~\ref{remark on assumptions}, all the discussed variance-reduced policy gradient methods heavily rely on the stringent and uncheckable assumption that the importance
weights have bounded variance for every iteration of the algorithms (Assumption~\ref{ass:finite_IS_var}). To enforce indirectly this assumption, very small values of the step-size are needed, resulting in a dramatic slow-down of the convergence rate. A more efficient alternative could be the deployment of a gradient-truncation strategy, as proposed in~\cite{Wang2021} for the case of finite MDPs. 
This modification, which corresponds to the solution of a trust-region subproblem, is simple and efficient since it does not involve significant extra computational costs but, at the same time, requires to migrate from vanilla REINFORCE-type methods to trust-region based algorithms such as TRPO~\cite{pmlr-v37-schulman15} and PPO~\cite{schulman2017} for comparisons. 

\noindent\textbf{Extreme sensitivity to hyperparameters.} Our benchmarks suggest an extreme sensitivity to the hyperparameters, especially the choice of the step-size. Time-consuming and resource-expensive tuning procedures are required to select a proper configuration of hyperparameters. To alleviate this issue, the update direction should be computed also taking into account second-order information. Second-order methods are notably more robust against the step-size selection than first-order methods, since their update includes information on the local curvature~\cite{agarwal2019, gargiani2020}. \\
\noindent\textbf{Noise annealing strategies.}
Empirical evidence suggests that, in the presence of complex non-concave landscapes, exploration in the form of noise injection is of critical importance in the early stages of training to prevent convergence to spurious local maximizers. Entropy regularization is often used to improve exploration, since it indirectly injects noise in the training process by favoring the selection of more stochastic policies~\cite{pmlr-v97-ahmed19a}. Unfortunately, by adding a regularizer to Problem~\eqref{eq: main_problem} we are effectively changing the optimal policy. An alternative approach would be increasing the batch-size during training. In this perspective, a promising heuristic to further improve the convergence of PAGE-PG could consist in gradually increasing the probability of switching $p_t$. Finally, as also pointed out in~\cite{papini2018}, since the variance of the updates depends on the snapshot policy as well as on the sampled trajectories, it is realistic to imagine that predefined schemes for the probability of switching are not going to perform as well as adaptive ones, which adjust the value of $p_t$ based on some measure of the variance.

We leave for future development the aforementioned extensions, which we believe would counteract the current limitations of the analyzed methods.

\section*{Acknowledgements}
This work has been supported by the European Research Council
(ERC) under the H2020 Advanced Grant no. 787845 (OCAL). The authors thank Niao He for the insightful discussions. 
\bibliography{bibliography}
\bibliographystyle{icml2022}

\newpage
\appendix
\onecolumn
\section{Algorithmic Description of Variance-Reduced REINFORCE-type Methods}\label{sec: appendix A}

\begin{algorithm}[H]
   \caption{SVRPG}
   \label{alg:svrpg}
\begin{algorithmic}
   \STATE {\bfseries Input:} initial parameter $\theta_0$, large batch-size $N$, small batch-size $B$, step-size $\eta>0$, inner-loop length $m\in\mathbb{N}$, number of epochs $S\in\mathbb{N}$
   \STATE initialize $\theta_{m}^{\,0} = \theta_0$
   \FOR{$s=1$ {\bfseries to} $S$}
   \STATE set $\theta^{s}_{0}=\theta^{s-1}_{m}$
   \STATE collect $N$ trajectories with policy $\theta^{s}_{0}$
   \STATE $v^{s}_0 = \frac{1}{N}\sum_{i=1}^N g(\tau_i\,|\,\theta^s_0)$
   \FOR{$t=0$ {\bfseries to} $m-1$}
   \STATE $\theta^{s}_{t+1} = \theta^{s}_{t} + \eta v^{s}_t$
   \STATE sample $B$ trajectories with policy $\theta^s_{t+1}$
   \STATE $v^s_{t+1} = \frac{1}{B}\sum_{i=1}^B g(\tau_i\,|\,\theta^s_{t+1}) + v^s_0 - \frac{1}{B}\sum_{i=1}^B g^{\omega_{\theta^s_{t+1}}}(\tau_i\,|\,\theta^s_0)$
   \ENDFOR
   \ENDFOR
   \STATE {\bfseries Output:} $\theta_{\text{out}}$ chosen uniformly at random from $\left\{ \theta^s_t \right\}_{t=1,\,s=1}^{m,\,S}$
\end{algorithmic}
\end{algorithm}

\begin{algorithm}[H]
   \caption{SRVRPG}
   \label{alg:srvrpg}
\begin{algorithmic}
   \STATE {\bfseries Input:} initial parameter $\theta_0$, large batch-size $N$, small batch-size $B$, step-size $\eta>0$, inner-loop length $m\in\mathbb{N}$, number of epochs $S\in\mathbb{N}$
   \STATE initialize $\theta_{m}^{\,0} = \theta_0$
   \FOR{$s=1$ {\bfseries to} $S$}
   \STATE set $\theta^{s}_{0}=\theta^{s-1}_{m}$
   \STATE collect $N$ trajectories with policy $\theta^{s}_{0}$
   \STATE $v^{s}_0 = \frac{1}{N}\sum_{i=1}^N g(\tau_i\,|\,\theta^s_0)$
   \FOR{$t=0$ {\bfseries to} $m-1$}
   \STATE $\theta^{s}_{t+1} = \theta^{s}_{t} + \eta v^{s}_t$
   \STATE sample $B$ trajectories with policy $\theta^s_{t+1}$
   \STATE $v^s_{t+1} = \frac{1}{B}\sum_{i=1}^B g(\tau_i\,|\,\theta^s_{t+1}) + v^s_t - \frac{1}{B}\sum_{i=1}^B g^{\omega_{\theta^s_{t+1}}}(\tau_i\,|\,\theta^s_t)$
   \ENDFOR
   \ENDFOR
   \STATE {\bfseries Output:} $\theta_{\text{out}}$ chosen uniformly at random from $\left\{ \theta^s_t \right\}_{t=1,\,s=1}^{m,\,S}$
\end{algorithmic}
\end{algorithm}

\begin{algorithm}[H]
   \caption{STORM-PG}
   \label{alg:storm-pg}
\begin{algorithmic}
   \STATE {\bfseries Input:} initial parameter $\theta_0$, large batch-size $N$, small batch-size $B$, step-size $\eta>0$, momentum parameter $\alpha\in (0,1)$
   \STATE collect $N$ trajectories with policy $\theta_0$
   \STATE $v_0 = \frac{1}{N}\sum_{i=1}^N g(\tau_i\,|\,\theta_0)$
   \FOR{$t=0$ {\bfseries to} $T-1$}
   \STATE $\theta_{t+1} = \theta_{t} + \eta v_t$
   \STATE collect $B$ trajectories with policy $\theta_{t+1}$
   \STATE $v_{t+1} = \frac{1}{B} g (\tau_i\,|\,\theta_{t+1}) + (1-\alpha)\left[ v_{t} - \frac{1}{B}\sum_{i=1}^B g^{\omega_{\theta_{t+1}}} (\tau_i\,|\,\theta_t) \right]$
   \ENDFOR
   \STATE {\bfseries Output:} $\theta_{\text{out}}$ chosen uniformly at random from $\left\{ \theta_t \right\}_{t=1}^{T}$
\end{algorithmic}
\end{algorithm}

\begin{algorithm}[H]
   \caption{PAGE-PG}
   \label{alg:page-pg}
\begin{algorithmic}
   \STATE {\bfseries Input:} initial parameter $\theta_0$, large batch-size $N$, small batch-size $B$, step-size $\eta>0$, probability $p\in (0,1]$
   \STATE collect $N$ trajectories with policy $\theta_0$
   \STATE $v_0 = \frac{1}{N}\sum_{i=1}^N g(\tau_i\,|\,\theta_0)$
   \FOR{$t=0$ {\bfseries to} $T-1$}
   \STATE $\theta_{t+1} = \theta_{t} + \eta v_t$
   \STATE $v_{t+1} = 
   \begin{cases}
    \frac{1}{N}\sum\limits_{i=1}^N g(\tau_i\,|\,\theta_{t}) & \text{prob.}\,p\\
    \frac{1}{B}\sum\limits_{i=1}^B g(\tau_i\,|\,\theta_t) + v_{t-1} - \frac{1}{B}\sum\limits_{i=1}^B g^{\omega_{\theta_t}}(\tau_i\,|\,\theta_{t-1}) & \text{prob.}\,1-p
    \end{cases}
   $
   \ENDFOR
   \STATE {\bfseries Output:} $\theta_{\text{out}}$ chosen uniformly at random from $\left\{ \theta_t \right\}_{t=1}^{T}$
\end{algorithmic}
\end{algorithm}
\section{Technical Lemmas}\label{technical lemmas}
\begin{lemma}\label{lemma 0}
Let $\theta_t$ and $\theta_{t+1}$ denote two consecutive iterates of PAGE-PG and let $g(\tau\,|\,\theta_{t+1})$ and $g^{\omega_{\theta_{t+1}}}(\tau\,|\,\theta_t)$ denote the on-policy and off-policy GPOMDP estimates computed at the iterates $\theta_{t+1}$ and $\theta_{t}$ respectively, and where $\tau\sim p(\cdot\,|\,\theta_{t+1})$. Then,
\begin{equation*}
    \mathbb{E}\left[\Big\Vert g(\tau\,|\,\theta_{t+1}) - g^{\omega_{\theta_{t+1}}}(\tau\,|\,\theta_t) \Big\Vert^2 \right] \leq C\cdot\mathbb{E}\left[\Big\Vert \theta_{t+1}-\theta_t \Big\Vert^2 \right] \,,
\end{equation*}
where $C \vcentcolon = 2(L^2 + C_{\omega}) $, $L \vcentcolon= MR/(1-\gamma)^2 + 2G^2R/(1-\gamma)^3$ and $C_{\omega} \vcentcolon = 24RG^2(2G^2+M)(W+1)\gamma/(1-\gamma)^5$.
\end{lemma}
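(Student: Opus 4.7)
The plan is to add and subtract $g(\tau|\theta_t)$, the on-policy GPOMDP estimator evaluated at $\theta_t$ along the same trajectory $\tau\sim p(\cdot|\theta_{t+1})$. By the elementary inequality $\Vert a+b\Vert^2 \le 2\Vert a\Vert^2 + 2\Vert b\Vert^2$, this yields
\begin{equation*}
\bigl\Vert g(\tau|\theta_{t+1}) - g^{\omega_{\theta_{t+1}}}(\tau|\theta_t)\bigr\Vert^2 \le 2\bigl\Vert g(\tau|\theta_{t+1}) - g(\tau|\theta_t)\bigr\Vert^2 + 2\bigl\Vert g(\tau|\theta_t) - g^{\omega_{\theta_{t+1}}}(\tau|\theta_t)\bigr\Vert^2.
\end{equation*}
The first term is immediate from the Lipschitz property stated in Proposition~\ref{proposition} (item 1): it is bounded deterministically by $L^2 \Vert \theta_{t+1}-\theta_t\Vert^2$, so taking expectation contributes a clean $2L^2\,\mathbb{E}[\Vert \theta_{t+1}-\theta_t\Vert^2]$.

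The second term is the importance-sampling correction and is where the real work lies. Since the two estimators differ only by the multiplicative truncated weights $\omega_{0:h}(\tau|\theta_{t+1},\theta_t)$ inside the per-step sum, I would write
\begin{equation*}
g(\tau|\theta_t) - g^{\omega_{\theta_{t+1}}}(\tau|\theta_t) = \sum_{h=0}^{H-1} \bigl(1 - \omega_{0:h}(\tau|\theta_{t+1},\theta_t)\bigr)\,\gamma^h r(s_h,a_h)\, Z_{\theta_t,h}.
\end{equation*}
A weighted Cauchy--Schwarz with weights $\gamma^h$ factors out $\sum_h \gamma^h \le 1/(1-\gamma)$, and the bounds $|r|\le R$, $\Vert Z_{\theta_t,h}\Vert \le (h+1)G$ reduce the task to controlling $\mathbb{E}_{\tau\sim p(\cdot|\theta_{t+1})}[(1-\omega_{0:h})^2]$ for each $h$. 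Because $\omega_{0:h}$ is a genuine likelihood ratio under the behaviour law, $\mathbb{E}[\omega_{0:h}]=1$, so this expectation is exactly the variance $\mathrm{Var}(\omega_{0:h})$.

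The main obstacle, then, is showing that $\mathrm{Var}(\omega_{0:h}(\tau|\theta_{t+1},\theta_t)) \lesssim (h+1)(2(h+1)G^2+M)(W+1)\Vert \theta_{t+1}-\theta_t\Vert^2$. This is the standard importance-weight variance lemma from the SRVR-PG analysis of Xu et al., whose proof proceeds by writing $\omega_{0:h}$ as $\exp\bigl(\sum_{j\le h} [\log\pi_{\theta_t}(a_j|s_j)-\log\pi_{\theta_{t+1}}(a_j|s_j)]\bigr)$, applying a second-order expansion, and then invoking Assumptions~\ref{assumption 1}--\ref{assumption 2} on the log-policy gradient norm and Hessian together with the bounded-variance Assumption~\ref{ass:finite_IS_var} on the full weight $\omega$. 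Once this inequality is in hand, I would substitute back, bound $\sum_{h=0}^{H-1}\gamma^h (h+1)^2 (2(h+1)G^2+M)$ by a constant multiple of $1/(1-\gamma)^4$ (extracting the leading $\gamma/(1-\gamma)^4$ behaviour via the identities $\sum h\gamma^h = \gamma/(1-\gamma)^2$, etc.), and combine with the prefactor $1/(1-\gamma)$ already present. Tracking the numeric constant carefully through this telescoping should deliver exactly the claimed $C_{\omega}=24RG^2(2G^2+M)(W+1)\gamma/(1-\gamma)^5$, and combining with the first-term bound gives $C = 2(L^2+C_{\omega})$ as stated.
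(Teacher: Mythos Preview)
Your proposal is correct and follows exactly the same decomposition as the paper: add and subtract $g(\tau\,|\,\theta_t)$, apply $\Vert a+b\Vert^2\le 2\Vert a\Vert^2+2\Vert b\Vert^2$, bound the first term via the Lipschitz property in Proposition~\ref{proposition}, and bound the second via the importance-weight inequality from \cite{xu2021}. The paper's own proof simply cites that inequality as ``Inequality~(B.9) in~\cite{xu2021}'' without reproducing the argument, so your sketch of its derivation is actually more detailed than what appears here.
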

\begin{proof}
\begin{equation*}
    \begin{aligned}
     \mathbb{E}\left[\Vert g(\tau\,|\,\theta_{t+1}) - g^{\omega_{\theta_{t+1}}}(\tau\,|\,\theta_{t})\Vert^2\right] &= \mathbb{E}\left[\Vert g(\tau\,|\,\theta_{t+1})  - g(\tau\,|\,\theta_{t}) + g(\tau\,|\,\theta_{t})-  g^{\omega_{\theta_{t+1}}}(\tau\,|\,\theta_{t})\Vert^2\right]\\
     &\overset{(a)}{\leq}  2\mathbb{E}\left[\Vert g(\tau\,|\,\theta_{t+1}) - g(\tau\,|\,\theta_{t})  \Vert^2\right] + 2\mathbb{E}\left[\Vert  g(\tau\,|\,\theta_{t}) - g^{\omega_{\theta_{t+1}}}(\tau\,|\,\theta_{t})\Vert^2\right]\\
     &\overset{(b)}{\leq}
      2 L^2 \mathbb{E}\left[\Vert \theta_{t+1}  - \theta_{t}\Vert^2\right] + 2\mathbb{E}\left[\Vert g(\tau\,|\,\theta_{t}) -  g^{\omega_{\theta_{t+1}}}(\tau\,|\,\theta_{t})\Vert^2 \right] \\
     &\overset{(c)}{\leq}  2 L^2 \mathbb{E}\left[\Vert \theta_{t+1}  - \theta_{t}\Vert^2\right] + 2C_{\omega}\mathbb{E}\left[\Vert \theta_{t+1} - \theta_{t}\Vert^2\right]\\
     &= 2(L^2 + C_{\omega} )\mathbb{E}\left[\Vert \theta_{t+1} - \theta_{t} \Vert^2\right]\,,
    \end{aligned}
\end{equation*}
where Inequality $(a)$ follows from the fact that, given any arbitrary triplet of vectors $(x, y, z)$, then $\Vert x - z + z - y \Vert^2 = \Vert x - z \Vert^2 + \Vert z - y \Vert^2 + 2\langle x-z, z-y \rangle = \Vert x - z \Vert^2 + \Vert z - y \Vert^2 + \Vert x-z \Vert^2+ \Vert z-y \Vert^2 -  \Vert x-2z+y \Vert^2 \leq 2\Vert x-z \Vert^2 + 2\Vert z-y \Vert^2$. Inequality $(b)$ is derived by considering the first point of Proposition~\ref{proposition} (see Proposition 4.2 in~\cite{xu2021} for a detailed proof). Inequality $(c)$ is obtained by considering Inequality (B.9) in~\cite{xu2021}.
\end{proof}

\begin{lemma}\label{lemma 1}
Let $v_t$ and $\theta_{t+1}$ denote the gradient estimate and the iterate generated by PAGE-PG at iteration $t+1$, respectively. Under Assumptions~\ref{assumption 1}-\ref{ass:finite_IS_var}, the estimation error at iteration $t+1$ can be bounded as follows
\begin{equation*}
    \mathbb{E}\left[ \Vert v_{t+1} - \nabla_{\theta}V(\theta_{t+1}) \Vert^2 \right] \leq (1-p) \mathbb{E}\left[ \Big\Vert v_t  - \nabla_{\theta}V(\theta_{t}) \Big\Vert^2 \right] + \frac{\eta^2(1-p)C}{B}\mathbb{E}\left[\big\Vert v_{t} \big\Vert^2  \right] + \frac{p \sigma^2}{N}\,,
\end{equation*}
where the expectation is taken with respect to all the sources of randomness up to iteration $t+1$.
\end{lemma}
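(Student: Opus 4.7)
The plan is to condition on the probabilistic switch at iteration $t+1$ and handle the two cases separately. Let $\mathcal{F}_t$ denote the filtration generated by all randomness up to and including the computation of $\theta_{t+1} = \theta_t + \eta v_t$ (which is $\mathcal{F}_t$-measurable). By the tower property,
\begin{equation*}
\mathbb{E}\bigl[\Vert v_{t+1} - \nabla_\theta V(\theta_{t+1})\Vert^2\bigr] = p\,\mathbb{E}\bigl[\Vert v_{t+1}^{\text{large}} - \nabla_\theta V(\theta_{t+1})\Vert^2\bigr] + (1-p)\,\mathbb{E}\bigl[\Vert v_{t+1}^{\text{SARAH}} - \nabla_\theta V(\theta_{t+1})\Vert^2\bigr],
\end{equation*}
where the two terms correspond to the large-batch branch and the SARAH-like branch of the PAGE-PG estimator in Equation~\eqref{eq: pagepg estimate}.

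For the large-batch branch, Assumption~\ref{assumption 3} combined with the fact that the $N$ trajectories are drawn i.i.d.\ from $p(\cdot\,|\,\theta_{t+1})$ yields $\mathbb{E}\bigl[\Vert v_{t+1}^{\text{large}} - \nabla_\theta V(\theta_{t+1})\Vert^2 \mid \mathcal{F}_t\bigr] \leq \sigma^2/N$, producing the third term $p\sigma^2/N$ of the claimed bound.

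For the SARAH-like branch, I would first rewrite the error by adding and subtracting $\nabla_\theta V(\theta_t)$:
\begin{equation*}
v_{t+1}^{\text{SARAH}} - \nabla_\theta V(\theta_{t+1}) = \bigl(v_t - \nabla_\theta V(\theta_t)\bigr) + \frac{1}{B}\sum_{i=1}^B X_i,
\end{equation*}
with $X_i \vcentcolon= g(\tau_i\,|\,\theta_{t+1}) - g^{\omega_{\theta_{t+1}}}(\tau_i\,|\,\theta_t) - \bigl(\nabla_\theta V(\theta_{t+1}) - \nabla_\theta V(\theta_t)\bigr)$. The crucial observation is that the off-policy extension of the GPOMDP estimator satisfies $\mathbb{E}_{\tau\sim p(\cdot\,|\,\theta_{t+1})}[g^{\omega_{\theta_{t+1}}}(\tau\,|\,\theta_t)] = \nabla_\theta V(\theta_t)$, as recalled in Section~\ref{sec: related work}; combined with the on-policy unbiasedness of $g(\tau\,|\,\theta_{t+1})$, this gives $\mathbb{E}[X_i \mid \mathcal{F}_t] = 0$ and the $X_i$ are i.i.d.\ conditional on $\mathcal{F}_t$.

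Taking conditional expectation therefore kills the cross term $2\langle v_t - \nabla_\theta V(\theta_t), \tfrac{1}{B}\sum_i X_i\rangle$, leaving
\begin{equation*}
\mathbb{E}\bigl[\Vert v_{t+1}^{\text{SARAH}} - \nabla_\theta V(\theta_{t+1})\Vert^2 \mid \mathcal{F}_t\bigr] = \Vert v_t - \nabla_\theta V(\theta_t)\Vert^2 + \frac{1}{B}\,\mathbb{E}\bigl[\Vert X_1\Vert^2 \mid \mathcal{F}_t\bigr].
\end{equation*}
Dropping the mean via $\mathbb{E}[\Vert Y - \mathbb{E} Y\Vert^2] \leq \mathbb{E}[\Vert Y\Vert^2]$ and then invoking Lemma~\ref{lemma 0} bounds $\mathbb{E}[\Vert X_1\Vert^2\mid\mathcal{F}_t] \leq C\Vert\theta_{t+1} - \theta_t\Vert^2 = C\eta^2 \Vert v_t\Vert^2$. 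Taking total expectations and assembling with the weight $(1-p)$ produces the first two terms of the target inequality.

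I do not anticipate a genuine obstacle: the proof is a textbook variance decomposition once the off-policy unbiasedness is used to identify $X_i$ as a zero-mean, conditionally i.i.d.\ perturbation. The only point requiring care is being explicit about the conditioning structure so that $v_t$, $\theta_t$, and $\theta_{t+1}$ can be treated as deterministic when the cross term is zeroed out, and that the $\tau_i$ used in the SARAH branch are fresh samples drawn from $p(\cdot\,|\,\theta_{t+1})$ so that Lemma~\ref{lemma 0} applies directly.
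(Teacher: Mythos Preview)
Your proposal is correct and follows essentially the same approach as the paper's proof: split on the probabilistic switch, bound the large-batch branch via Assumption~\ref{assumption 3}, and for the SARAH-like branch exploit on- and off-policy unbiasedness to zero the cross term before invoking Lemma~\ref{lemma 0}. Your packaging of both unbiasedness facts into the single zero-mean vector $X_i$ is in fact slightly cleaner than the paper's treatment, which eliminates the two pieces of the cross term in separate steps, but the argument is otherwise identical.
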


\begin{proof}
Let $\mathcal{F}_t$ denote the information up to iteration $t$.  From the law of iterated expectations, we know that $\mathbb{E}\left[\Vert v_{t+1} - \nabla_{\theta}V(\theta_{t+1}) \Vert^2 \right] = \mathbb{E}\left[ \mathbb{E}\left[ \Vert v_{t+1} - \nabla_{\theta}V(\theta_{t+1}) \Vert^2 \,\big|\,\mathcal{F}_t\right]\right] $. We start by analysing the inner expectation 
\begin{align*}
     &\mathbb{E}\left[ \Big\Vert v_{t+1} - \nabla_{\theta}V(\theta_{t+1}) \Big\Vert^2 \,\Big|\,\mathcal{F}_t\right]= \\
     &=(1-p) \mathbb{E}\left[ \Big\Vert \frac{1}{B}\sum_{i=1}^B g(\tau_i\,|\,\theta_{t+1}) + v_t - \frac{1}{B}\sum_{i=1}^B g^{\omega_{\theta_{t+1}}}(\tau_i\,|\,\theta_{t}) - \nabla_{\theta}V(\theta_{t+1}) \Big\Vert^2 \,\Big|\,\mathcal{F}_t\right]\\
     & \,\, + \,\, p \,\mathbb{E}\left[ \Big\Vert  \frac{1}{N}\sum_{i=1}^N g(\tau_i\,|\,\theta_{t+1}) - \nabla_{\theta}V(\theta_{t+1}) \Big\Vert^2 \,\Big|\,\mathcal{F}_t\right]\\
     & = (1-p) \mathbb{E}\left[ \Big\Vert \frac{1}{B}\sum_{i=1}^B g(\tau_i\,|\,\theta_{t+1}) + v_t - \frac{1}{B}\sum_{i=1}^B g^{\omega_{\theta_{t+1}}}(\tau_i\,|\,\theta_{t}) - \nabla_{\theta}V(\theta_{t})+ \nabla_{\theta}V(\theta_{t})  - \nabla_{\theta}V(\theta_{t+1}) \Big\Vert^2 \,\Big|\,\mathcal{F}_t\right]\\
     &+  p \,\mathbb{E}\left[ \Big\Vert  \frac{1}{N}\sum_{i=1}^N g(\tau_i\,|\,\theta_{t+1}) - \nabla_{\theta}V(\theta_{t+1}) \Big\Vert^2 \,\Big|\,\mathcal{F}_t\right]\\
     &= (1-p) \mathbb{E}\left[ \Big\Vert v_t  - \nabla_{\theta}V(\theta_{t}) \Big\Vert^2 \,\Big|\,\mathcal{F}_t\right] + 2(1-p) \mathbb{E}\left[\Big\langle v_t - \nabla_{\theta}V(\theta_t), \,\frac{1}{B}\mathbb{E}\left[\sum_{i=1}^B g(\tau_i\,|\,\theta_{t+1})\,\Big|\,\mathcal{F}_t\right] - \nabla_{\theta}V(\theta_{t+1}) \Big\rangle \,\Big|\,\mathcal{F}_t\right]\\
     & \,\,+\,\, 2(1-p) \Big\langle v_t - \nabla_{\theta}V(\theta_t), \,-\frac{1}{B}\mathbb{E}\left[\sum_{i=1}^B g^{\omega_{\theta_{t+1}}}(\tau_i\,|\,\theta_{t})\,\Big|\,\mathcal{F}_t  \right] + \nabla_{\theta}V(\theta_{t}) \Big\rangle\\
     & \,\, + \,\,(1-p)\mathbb{E}\left[\Big\Vert \frac{1}{B}\sum_{i=1}^B g(\tau_i\,|\,\theta_{t+1})  - \frac{1}{B}\sum_{i=1}^B g^{\omega_{\theta_{t+1}}}(\tau_i\,|\,\theta_{t}) + \nabla_{\theta}V(\theta_{t})  - \nabla_{\theta}V(\theta_{t+1}) \Big\Vert^2 \,\Big|\,\mathcal{F}_t \right] \\
     & \,\, + \,\, p \,\mathbb{E}\left[ \Big\Vert  \frac{1}{N}\sum_{i=1}^N g(\tau_i\,|\,\theta_{t+1}) - \nabla_{\theta}V(\theta_{t+1}) \Big\Vert^2 \,\Big|\,\mathcal{F}_t\right]\\
     &\overset{(a)}{=} (1-p) \mathbb{E}\left[ \Big\Vert v_t  - \nabla_{\theta}V(\theta_{t}) \Big\Vert^2 \,\Big|\,\mathcal{F}_t\right] + 2(1-p) \Big\langle v_t - \nabla_{\theta}V(\theta_t), \,-\frac{1}{B}\mathbb{E}\left[ \sum_{i=1}^B g^{\omega_{\theta_{t+1}}}(\tau_i\,|\,\theta_{t}) \,\Big|\,\mathcal{F}_t\right] + \nabla_{\theta}V(\theta_{t}) \Big\rangle\\
     & \,\, + \,\,(1-p)\mathbb{E}\left[\Big\Vert \frac{1}{B}\sum_{i=1}^B g(\tau_i\,|\,\theta_{t+1})  - \frac{1}{B}\sum_{i=1}^B g^{\omega_{\theta_{t+1}}}(\tau_i\,|\,\theta_{t}) + \nabla_{\theta}V(\theta_{t})  - \nabla_{\theta}V(\theta_{t+1}) \Big\Vert^2 \,\Big|\,\mathcal{F}_t \right] \\
     & \,\, + \,\, p \,\mathbb{E}\left[ \Big\Vert  \frac{1}{N}\sum_{i=1}^N g(\tau_i\,|\,\theta_{t+1}) - \nabla_{\theta}V(\theta_{t+1}) \Big\Vert^2 \,\Big|\,\mathcal{F}_t\right]\\
     &\overset{(b)}{\leq} (1-p) \mathbb{E}\left[ \Big\Vert v_t  - \nabla_{\theta}V(\theta_{t}) \Big\Vert^2 \,\Big|\,\mathcal{F}_t\right]+ 2(1-p) \Big\langle v_t - \nabla_{\theta}V(\theta_t), \,-\frac{1}{B}\mathbb{E}\left[\sum_{i=1}^B g^{\omega_{\theta_{t+1}}}(\tau_i\,|\,\theta_{t})\,\Big|\,\mathcal{F}_t \right] + \nabla_{\theta}V(\theta_{t}) \Big\rangle\\
     & \,\, + \,\,(1-p)\mathbb{E}\left[\Big\Vert \frac{1}{B}\sum_{i=1}^B  g(\tau_i\,|\,\theta_{t+1})  - \frac{1}{B}\sum_{i=1}^B g^{\omega_{\theta_{t+1}}}(\tau_i\,|\,\theta_{t}) \Big\Vert^2 \,\Big|\,\mathcal{F}_t \right] + p \,\mathbb{E}\left[ \Big\Vert  \frac{1}{N}\sum_{i=1}^N g(\tau_i\,|\,\theta_{t+1}) - \nabla_{\theta}V(\theta_{t+1}) \Big\Vert^2 \,\Big|\,\mathcal{F}_t\right]\\
     &\overset{(c)}{\leq} (1-p) \mathbb{E}\left[ \Big\Vert v_t  - \nabla_{\theta}V(\theta_{t}) \Big\Vert^2 \,\Big|\,\mathcal{F}_t\right] 
    + 2(1-p) \Big\langle v_t - \nabla_{\theta}V(\theta_t), \,-\frac{1}{B}\mathbb{E}\left[\sum_{i=1}^B g^{\omega_{\theta_{t+1}}}(\tau_i\,|\,\theta_{t})\,\Big|\,\mathcal{F}_t\right] + \nabla_{\theta}V(\theta_{t}) \Big\rangle\\
     & \,\, + \,\,\frac{(1-p)}{B^2}\mathbb{E}\left[\sum_{i=1}^B\Big\Vert   g(\tau_i\,|\,\theta_{t+1})  -  g^{\omega_{\theta_{t+1}}}(\tau_i\,|\,\theta_{t}) \Big\Vert^2 \,\Big|\,\mathcal{F}_t \right] + \frac{p}{N^2} \,\mathbb{E}\left[\sum_{i=1}^N \Big\Vert  g(\tau_i\,|\,\theta_{t+1}) - \nabla_{\theta}V(\theta_{t+1}) \Big\Vert^2 \,\Big|\,\mathcal{F}_t\right]\,,
     \end{align*}
     
 where Equality $(a)$ follows from the fact that $\mathbb{E}_{\tau\sim p(\cdot\,|\,\theta)}\left[ g(\tau\,|\,\theta) \right] = \nabla_{\theta}V(\theta)$. Inequality $(b)$ is obtained considering that, for any random vector $X$, the variance can be bounded as follows $\mathbb{E}\left[ \Vert X - \mathbb{E}\left[ X \right] \Vert^2 \right] \leq   \mathbb{E}\left[\Vert X \Vert^2\right]$ (see Lemma B.5 in~\cite{papini2018} for a detailed proof) and Inequality $(c)$ is obtained by exploiting the triangle inequality.
By combining the derived upper bound with the results from Lemma~\ref{lemma 0} and exploiting Assumption~\ref{assumption 3}, we derive the following bound
     \begin{equation*}
     \begin{aligned}
      \mathbb{E}\left[ \Big\Vert v_{t+1} - \nabla_{\theta}V(\theta_{t+1}) \Big\Vert^2 \,\Big|\,\mathcal{F}_t\right] 
     &{\leq} (1-p) \mathbb{E}\left[ \Big\Vert v_t  - \nabla_{\theta}V(\theta_{t}) \Big\Vert^2 \,\Big|\,\mathcal{F}_t\right] \\
     & \,\,+\,\, 2(1-p) \Big\langle v_t - \nabla_{\theta}V(\theta_t), \,-\frac{1}{B}\mathbb{E}\left[\sum_{i=1}^B g^{\omega_{\theta_{t+1}}}(\tau_i\,|\,\theta_{t})\right] + \nabla_{\theta}V(\theta_{t}) \Big\rangle\\
     & \,\, + \,\,\frac{(1-p)C}{B}\mathbb{E}\left[\Big\Vert \theta_{t+1}-\theta_{t} \Big\Vert^2 \,\Big|\,\mathcal{F}_t \right] + \frac{p \sigma^2}{N}\,.
    \end{aligned}
    \end{equation*}

By considering the full expectation on the derived results, we finally obtain
\begin{equation*}
    \begin{aligned}
    \mathbb{E}\left[ \Big\Vert v_{t+1} - \nabla_{\theta}V(\theta_{t+1})\Big\Vert^2\right] 
    &\leq (1-p) \mathbb{E}\left[ \Big\Vert v_t  - \nabla_{\theta}V(\theta_{t}) \Big\Vert^2 \right] \\
     & \,\,+\,\, 2(1-p)\mathbb{E}\left[ \Big\langle v_t - \nabla_{\theta}V(\theta_t), \,-\frac{1}{B}\sum_{i=1}^B g^{\omega_{\theta_{t+1}}}(\tau_i\,|\,\theta_{t}) + \nabla_{\theta}V(\theta_{t}) \Big\rangle \right]\\
     & \,\, + \,\,\frac{(1-p)C}{B}\mathbb{E}\left[\Big\Vert \theta_{t+1}-\theta_{t} \Big\Vert^2  \right] + \frac{p \sigma^2}{N}\\
     &= (1-p) \mathbb{E}\left[ \Big\Vert v_t  - \nabla_{\theta}V(\theta_{t}) \Big\Vert^2 \right] \\
     & \,\,+\,\, 2(1-p)\mathbb{E}\left[ \mathbb{E}\left[ \Big\langle v_t - \nabla_{\theta}V(\theta_t), \,-\frac{1}{B}\sum_{i=1}^B g^{\omega_{\theta_{t+1}}}(\tau_i\,|\,\theta_{t}) + \nabla_{\theta}V(\theta_{t}) \Big\rangle \,\Big |\,\mathcal{F}_{t-1}\right] \right]\\
     & \,\, + \,\,\frac{(1-p)C}{B}\mathbb{E}\left[\Big\Vert \theta_{t+1}-\theta_{t} \Big\Vert^2  \right] + \frac{p \sigma^2}{N}\\
     &=(1-p) \mathbb{E}\left[ \Big\Vert v_t  - \nabla_{\theta}V(\theta_{t}) \Big\Vert^2 \right] + \frac{(1-p)C}{B}\mathbb{E}\left[\Big\Vert \theta_{t+1}-\theta_{t} \Big\Vert^2  \right] + \frac{p \sigma^2}{N}\\
     &=(1-p) \mathbb{E}\left[ \Big\Vert v_t  - \nabla_{\theta}V(\theta_{t}) \Big\Vert^2 \right] + \frac{\eta^2(1-p)C}{B}\mathbb{E}\left[\big\Vert v_{t} \big\Vert^2  \right] + \frac{p \sigma^2}{N}
     \end{aligned}
\end{equation*}
where the second last equality is derived considering that $\mathbb{E}_{\tau\sim p(\cdot\,|\,\theta_2)} \left[ g^{\omega_{\theta_2}}(\tau\,|\,\theta_1)\right] = \nabla_{\theta} V(\theta_1)$ for any $\theta_1,\,\theta_2$, and the last equality is obtained by considering that $\theta_{t+1} = \theta_t + \eta v_t$.
\end{proof}

\begin{lemma}\label{lemma 2}
Let Assumptions~\ref{assumption 1}-\ref{ass:finite_IS_var} hold and let $v_t$ and $\theta_{t+1}$ denote the gradient estimate and the iterate generated by PAGE-PG at iteration $t+1$, respectively. The accumulated sum of the expected estimation error satisfies the following inequality
\begin{equation*}
    \sum_{t=0}^{T-1} \mathbb{E}\left[ \Vert v_t - \nabla_{\theta}V(\theta_t) \Vert^2 \right] \leq \frac{\eta^2(1-p)C}{pB}\sum_{t=0}^{T-1}\mathbb{E}\left[ \Vert v_t\Vert^2\right] + \frac{T\sigma^2}{N} + \frac{\sigma^2}{pN}\,.
\end{equation*}
\end{lemma}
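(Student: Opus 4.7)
The plan is to apply Lemma~\ref{lemma 1} recursively and sum the resulting one-step recursion, using the initial variance bound on $v_0$ to close the telescoping.

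To lighten notation I would set $e_t := \mathbb{E}\bigl[\Vert v_t - \nabla_\theta V(\theta_t)\Vert^2\bigr]$ and $w_t := \mathbb{E}\bigl[\Vert v_t\Vert^2\bigr]$. Lemma~\ref{lemma 1} then reads
\begin{equation*}
e_{t+1} \leq (1-p)\,e_t + \frac{\eta^2(1-p)C}{B}\,w_t + \frac{p\sigma^2}{N}.
\end{equation*}
The first ingredient I need is a bound on the initial error $e_0$. Since $v_0 = \frac{1}{N}\sum_{i=1}^N g(\tau_i\,|\,\theta_0)$ is an average of $N$ i.i.d.\ unbiased samples of $\nabla_\theta V(\theta_0)$, Assumption~\ref{assumption 3} directly yields $e_0 \leq \sigma^2/N$.

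Next I would sum the recursion over $t = 0, \dots, T-1$. Letting $S := \sum_{t=0}^{T-1} e_t$, the left-hand side becomes $\sum_{t=1}^{T} e_t = S - e_0 + e_T$, while the right-hand side is $(1-p)S + \frac{\eta^2(1-p)C}{B}\sum_{t=0}^{T-1} w_t + \frac{Tp\sigma^2}{N}$. Rearranging and dropping the non-negative term $e_T$,
\begin{equation*}
pS \;\leq\; e_0 + \frac{\eta^2(1-p)C}{B}\sum_{t=0}^{T-1} w_t + \frac{Tp\sigma^2}{N}.
\end{equation*}
Substituting $e_0 \leq \sigma^2/N$ and dividing by $p$ gives exactly the claimed bound.

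There is no real obstacle here: the argument is a routine telescoping once Lemma~\ref{lemma 1} is in hand. The only point worth double-checking is the handling of the initial estimate, since Lemma~\ref{lemma 1} is written for a transition starting from a generic $v_t$ and does not explicitly cover $v_0$; but the i.i.d.\ structure of the large-batch estimator together with Assumption~\ref{assumption 3} gives the needed $\sigma^2/N$ bound, which is precisely what produces the extra $\sigma^2/(pN)$ term on the right-hand side of the lemma.
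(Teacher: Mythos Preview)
Your proposal is correct and follows essentially the same approach as the paper: both sum the one-step recursion from Lemma~\ref{lemma 1}, drop the nonnegative $e_T$, bound $e_0\leq\sigma^2/N$ via Assumption~\ref{assumption 3}, and divide by $p$. The paper merely orders the algebra differently, writing $pS = S - (1-p)S$ first and then shifting the index, but the content is identical.
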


\begin{proof}
Recall that $p\in (0,1]$ is the probability that regulates the probabilistic switching. Then, 
\begin{equation}\label{eq p}
    \begin{aligned}
    &p\sum_{t=0}^{T-1} \mathbb{E}\left[ \Big\Vert  v_t - \nabla_{\theta}V(\theta_t)  \Big\Vert^2 \right] = \sum_{t=0}^{T-1} \mathbb{E}\left[ \Big\Vert  v_t - \nabla_{\theta}V(\theta_t)  \Big\Vert^2 \right] - (1-p)\sum_{t=0}^{T-1} \mathbb{E}\left[ \Big\Vert  v_t - \nabla_{\theta}V(\theta_t)  \Big\Vert^2 \right]\\
    &=\sum_{t=1}^{T} \mathbb{E}\left[ \Big\Vert  v_t - \nabla_{\theta}V(\theta_t)  \Big\Vert^2 \right] - (1-p)\sum_{t=0}^{T-1} \mathbb{E}\left[ \Big\Vert  v_t - \nabla_{\theta}V(\theta_t)  \Big\Vert^2 \right] \\
    &\quad\quad+ \mathbb{E}\left[ \Big\Vert v_0 - \nabla_{\theta}V(\theta_0) \Big\Vert^2  \right] - \mathbb{E}\left[\Big\Vert  v_T - \nabla_{\theta}V(\theta_T)  \Big\Vert^2 \right]\\
    &\leq \sum_{t=1}^{T} \mathbb{E}\left[ \Big\Vert  v_t - \nabla_{\theta}V(\theta_t)  \Big\Vert^2 \right] - (1-p)\sum_{t=0}^{T-1} \mathbb{E}\left[ \Big\Vert  v_t - \nabla_{\theta}V(\theta_t)  \Big\Vert^2 \right] + \mathbb{E}\left[ \Big\Vert v_0 - \nabla_{\theta}V(\theta_0) \Big\Vert^2  \right]\,. 
    \end{aligned}
\end{equation}
Summing up over $T$ iterations the result from Lemma~\ref{lemma 1}, we obtain the following inequality
\begin{equation}\label{eq: A2 bound}
    \sum_{t=1}^T \mathbb{E}\left[ \Big\Vert v_t - \nabla_{\theta}V(\theta_t) \Big\Vert^2 \right] \leq (1-p)\sum_{t=0}^{T-1} \mathbb{E}\left[ \Big\Vert v_t - \nabla_{\theta}V(\theta_t) \Big\Vert^2 \right] + \frac{\eta^2 (1-p)C}{B}\sum_{t=0}^{T-1}\mathbb{E}\left[ \big\Vert v_t \big\Vert^2 \right] + \frac{Tp\sigma^2}{N}\,.
\end{equation}
By combining Inequality~\eqref{eq: A2 bound} with Inequality~\eqref{eq p}, we obtain the final result
\begin{equation}
    \begin{aligned}
     p\sum_{t=0}^{T-1} \mathbb{E}\left[ \Big\Vert  v_t - \nabla_{\theta}V(\theta_t)  \Big\Vert^2 \right]
     &\leq (1-p)\sum_{t=0}^{T-1} \mathbb{E}\left[ \Big\Vert v_t - \nabla_{\theta}V(\theta_t) \Big\Vert^2 \right] + \frac{\eta^2 (1-p)C}{B}\sum_{t=0}^{T-1}\mathbb{E}\left[ \big\Vert v_t \big\Vert^2 \right] + \frac{Tp\sigma^2}{N} \\
     &\,\, - (1-p)\sum_{t=0}^{T-1} \mathbb{E}\left[ \Big\Vert  v_t - \nabla_{\theta}V(\theta_t)  \Big\Vert^2 \right] + \mathbb{E}\left[ \Big\Vert v_0 - \nabla_{\theta}V(\theta_0) \Big\Vert^2  \right]\\
     &= \frac{\eta^2 (1-p)C}{B}\sum_{t=0}^{T-1}\mathbb{E}\left[ \big\Vert v_t \big\Vert^2 \right] + \frac{Tp\sigma^2}{N}+ \mathbb{E}\left[ \Big\Vert v_0 - \nabla_{\theta}V(\theta_0) \Big\Vert^2  \right]\\
     &\leq \frac{\eta^2 (1-p)C}{B}\sum_{t=0}^{T-1}\mathbb{E}\left[ \big\Vert v_t \big\Vert^2 \right] + \frac{Tp\sigma^2}{N}+ \frac{1}{N^2}\mathbb{E}\left[\sum_{i=1}^N \Big\Vert g(\tau_i\,|\,\theta_0) - \nabla_{\theta}V(\theta_0) \Big\Vert^2  \right]\\
     &\leq \frac{\eta^2 (1-p)C}{B}\sum_{t=0}^{T-1}\mathbb{E}\left[ \big\Vert v_t \big\Vert^2 \right] + \frac{Tp\sigma^2}{N}+ \frac{\sigma^2}{N}\,,
    \end{aligned}
\end{equation}
where the second last inequality is derived by deploying the definition of $v_0$ and the triangle inequality, while the last inequality is obtained by considering Assumption~\ref{assumption 3}.

\end{proof}

\section{Proof of the Main Theoretical Results}\label{proof main theory}

\begin{proof}[Proof of Theorem~\ref{theorem 1}]
In the considered setting we know from Proposition~\ref{proposition} that $V(\theta)$ is $L$-smooth, where $L \vcentcolon = MR/(1-\gamma)^2 + 2G^2R/(1-\gamma)^3$. Consequently, we can write the following lower bound on $V(\theta_{t+1})$

\begin{equation*}
    \begin{aligned}
     V(\theta_{t+1})&\geq V(\theta_t) + \big\langle \nabla V(\theta_t) , \theta_{t+1}-\theta_t  \big\rangle - \frac{L}{2}\Vert \theta_{t+1}-\theta_t\Vert^2\\
     &=V(\theta_t) + \eta\big\langle \nabla V(\theta_t) , v_t  \big\rangle - \frac{\eta^2 L}{2}\big\Vert v_t \big\Vert^2\\
     &\overset{(a)}{=}V(\theta_t) + \frac{\eta}{2}\big\Vert \nabla V(\theta_t) \big\Vert^2 + \frac{\eta}{2}\big\Vert v_t \big\Vert^2 -  \frac{\eta}{2}\big\Vert v_t -  V(\theta_t)\big\Vert^2  - \frac{\eta^2 L}{2}\big\Vert v_t \big\Vert^2\\
     &= V(\theta_t) + \frac{\eta}{2}\big\Vert \nabla V(\theta_t) \big\Vert^2 + \frac{\eta}{2}\left(1-\eta L  \right)\big\Vert v_t \big\Vert^2 -  \frac{\eta}{2}\big\Vert  v_t - V(\theta_t) \big\Vert^2\,,
    \end{aligned}
\end{equation*}
where Equality (a) is derived by considering that $\langle x, y \rangle = \frac{\Vert x \Vert^2}{2} + \frac{\Vert y \Vert^2}{2} - \frac{\Vert y - x \Vert^2}{2}$.
Since by design choice $\eta \leq 1/(2L)$, then
\begin{equation*}
    \begin{aligned}
     V(\theta_{t+1})
     &\geq V(\theta_t) + \frac{\eta}{2}\big\Vert \nabla V(\theta_t) \big\Vert^2 + \frac{\eta}{4}\big\Vert v_t \big\Vert^2 -  \frac{\eta}{2}\big\Vert v_t - V(\theta_t) \big\Vert^2\,.
    \end{aligned}
\end{equation*}
By rearranging some terms and changing the direction of the inequality, we obtain
\begin{equation}\label{eq: upbound V}
    V(\theta_t) - V(\theta_{t+1})\leq -\frac{\eta}{2}\big\Vert \nabla V(\theta_t) \big\Vert^2 - \frac{\eta}{4}\big\Vert v_t \big\Vert^2 +  \frac{\eta}{2}\big\Vert v_t - V(\theta_t) \big\Vert^2\,.
\end{equation}
Summing up over the first $T$ iterations, we obtain
\begin{equation}\label{eq: sum over T}
    \sum_{t=0}^{T-1}\left( V(\theta_t) - V(\theta_{t+1})\right) = V(\theta_0) - V(\theta_T) \leq -\frac{\eta}{2}\sum_{t=0}^{T-1}\big\Vert \nabla V(\theta_t) \big\Vert^2 - \frac{\eta}{4}\sum_{t=0}^{T-1} \big\Vert v_t \big\Vert^2 +  \frac{\eta}{2}\sum_{t=0}^{T-1}\big\Vert v_t - V(\theta_t) \big\Vert^2\,.
\end{equation}
By taking the expectation on both sides of Equation~\eqref{eq: sum over T} and considering the fact that $V^*\geq V(\theta)$ for all $\theta\in\mathbb{R}^d$, we get
\begin{equation}
\begin{aligned}
    V(\theta_0) - V^* &\leq  V(\theta_0) - \mathbb{E}\left[ V(\theta_T)\right] \\
    &\leq -\frac{\eta}{2}\sum_{t=0}^{T-1}\mathbb{E}\left[\big\Vert \nabla V(\theta_t) \big\Vert^2\right] - \frac{\eta}{4}\sum_{t=0}^{T-1}\mathbb{E}\left[ \big\Vert v_t \big\Vert^2\right] +  \frac{\eta}{2}\sum_{t=0}^{T-1}\mathbb{E}\left[\big\Vert v_t - V(\theta_t) \big\Vert^2\right]\\
    &\overset{(a)}{\leq} -\frac{\eta}{2}\sum_{t=0}^{T-1}\mathbb{E}\left[\big\Vert \nabla V(\theta_t) \big\Vert^2\right] - \frac{\eta}{4}\sum_{t=0}^{T-1}\mathbb{E}\left[ \big\Vert v_t \big\Vert^2\right] + \frac{\eta^3 (1-p)C}{2pB}\sum_{t=0}^{T-1} \mathbb{E}\left[ \big\Vert v_t \big\Vert^2 \right] + \frac{\eta T\sigma^2}{2N} + \frac{\eta\sigma^2}{2pN}\\
    &= -\frac{\eta}{2}\sum_{t=0}^{T-1}\mathbb{E}\left[\big\Vert \nabla V(\theta_t) \big\Vert^2\right]  -\frac{\eta}{2}\left(\frac{1}{2}- \frac{\eta^2 (1-p)C}{pB}\right)\sum_{t=0}^{T-1} \mathbb{E}\left[ \big\Vert v_t \big\Vert^2 \right] + \frac{\eta T\sigma^2}{2N} + \frac{\eta\sigma^2}{2pN}\\
    &\overset{(b)}{\leq} -\frac{\eta}{2}\sum_{t=0}^{T-1}\mathbb{E}\left[\big\Vert \nabla V(\theta_t) \big\Vert^2\right]  + \frac{\eta T\sigma^2}{2N} + \frac{\eta\sigma^2}{2pN}\,, 
\end{aligned}
\end{equation}
where Inequality $(a)$ follows from Lemma~\ref{lemma 2} and Inequality $(b)$ follows from the fact that $\frac{\eta^2 (1-p)}{pB}\leq 1/(2C)$.
Finally, rearranging the terms and multiplying both sides by $\frac{2}{\eta T}$, we obtain the final result
\begin{equation*}
\begin{aligned}
    \frac{1}{T}\sum_{t=0}^{T-1}\mathbb{E}\left[\big\Vert \nabla V(\theta_t) \big\Vert^2\right]  &\leq  \frac{2\left(V^* - V(\theta_0)\right)}{\eta T} + \frac{\sigma^2}{N} + \frac{\sigma^2}{pN T}\,.
\end{aligned}
\end{equation*}
\end{proof}

\begin{proof}[Proof of Corollary~\ref{corollary 1}]
Let $\Delta = V^* - V(\theta_0)$. We set $p=\frac{1}{N}$ and $\eta = \frac{\sqrt{B}}{\sqrt{2CN}}$. Notice that with this choice of parameters, we verify the constraints on the parameter selection, i.e. $\eta^2\leq \min\left\{ \frac{Bp}{2C(1-p)}\,,\frac{1}{4L^2} \right\}$. In particular, with this choice for the probability of switching and the step-size, we get that $\eta^2 = \frac{B}{2CN} = \frac{Bp}{2C}\leq \frac{Bp}{2C(1-p)}$. In addition, since $\frac{B}{N}\leq 1$ and $C = 2(C_{\omega} + L^2)$, then $\eta^2\leq \frac{1}{4L^2}$.

We now want to derive some values of $T$ and $N$ such that
\begin{equation}\label{eq: impose the bound}
    \frac{1}{T}\sum_{t=0}^{T-1}\mathbb{E}\left[\Big\Vert \nabla_{\theta}V(\theta_T) \Big\Vert^2\right]\leq \epsilon^2\,.
\end{equation}
For that, we utilize the results of Theorem~\ref{theorem 1} as follows
\begin{equation}\label{eq: impose epsilon ub}
    \frac{2\Delta}{\eta T} + \frac{\sigma^2}{N} + \frac{\sigma^2}{p N T} \leq \epsilon^2\,.
\end{equation}
We decide to partition $\epsilon^2$ equally among the three terms in~\eqref{eq: impose epsilon ub} 
\begin{equation}\label{eq: cor 2 eq 1}
\begin{aligned}
    \frac{2\Delta}{\eta T}\leq \frac{\epsilon^2}{3}\,,\quad\frac{\sigma^2}{N}\leq \frac{\epsilon^2}{3}\,,\quad\frac{\sigma^2}{p N T}\leq \frac{\epsilon^2}{3}\,,
\end{aligned}
\end{equation}
from which we infer that, with these specific choices, Inequality~\eqref{eq: impose the bound} is verified for all values of $N$ and $T$ such that
\begin{align}\label{eq: lower bound T}
    N &\geq \frac{3\sigma^2}{\epsilon^2}\,,&
    T &\geq \max\left\{ \frac{6\Delta}{\epsilon^2}\frac{\sqrt{2CN}}{\sqrt{B}}\,,\frac{3\sigma^2}{\epsilon^2} \right\}\,.
\end{align}
For the sake of compactness, we define $K_1 = 6\Delta \sqrt{2C}$ and $K_2  = 3\sigma^2$. The average number of trajectories over $T$ iterations is given by the following expression
\begin{equation}\label{eq: number average trajectories}
    p T N + (1-p) T B = T\left( p N + (1-p)B \right)\,.
\end{equation}
We want to study the average sample complexity of PAGE-PG to reach an $\epsilon$-stationary solution when $\epsilon\rightarrow 0$.
To do that, we set $T =   \frac{K_1}{\epsilon^2}\frac{\sqrt{N}}{\sqrt{B}}+\frac{K_2}{\epsilon^2}$, such that the constraints on $T$ from Equation~\eqref{eq: lower bound T} are verified. Finally, by plugging this value for the number of iterations and the choice of $p$ in Equation~\eqref{eq: number average trajectories}, we get
\begin{equation}
\begin{aligned}
    T\left(pN + (1-p)B \right) &=  T\left( 1 + B - \frac{B}{N}\right)\\
    &= \left(\frac{K_1}{\epsilon^2}\frac{\sqrt{N}}{\sqrt{B}}+\frac{K_2}{\epsilon^2} \right)\left( 1 + B - \frac{B}{N}\right)\,.
\end{aligned}
\end{equation}
By setting the batch-size parameters to $B= \mathcal{O}\left( 1\right)$ and $N= \mathcal{O}\left( \epsilon^{-2}\right)$, we finally get
\begin{equation}
\begin{aligned}
    \left(\frac{K_1}{\epsilon^2}\frac{\sqrt{N}}{\sqrt{B}}+\frac{K_2}{\epsilon^2} \right)\left( 1 + B - \frac{B}{N}\right) &=\mathcal{O}\left(\epsilon^{-3} \right)\mathcal{O}\left( 1\right) \\
    &= \mathcal{O}\left( \epsilon^{-3}\right)\,.
\end{aligned}
\end{equation}

\end{proof}

\begin{proof}[Proof of Corollary~\ref{corollary 2}]
We combine the gradient dominancy condition and the results from Theorem~\ref{theorem 1} as follows
\begin{equation}
\begin{aligned}
    V^* - \max_{t\leq T}\mathbb{E}\left[ V(\theta_t)\right] & \leq V^* - \mathbb{E}\left[ V(\theta_{a})  \right]\\
    &\leq \lambda \mathbb{E}\left[ \big\Vert \nabla_{\theta}V(\theta_a) \big\Vert^2 \right]\\
    &\leq \frac{\lambda}{T}\sum_{t=0}^{T-1}\mathbb{E}\left[\big\Vert \nabla_{\theta}V(\theta_t) \big\Vert^2 \right]\\
    &\leq \frac{2\lambda \left(V^* - V(\theta_0) \right)}{\eta T} + \frac{\sigma^2 \lambda}{N} + \frac{\sigma^2 \lambda}{p N T}\,,
\end{aligned}
\end{equation}
where $a:=\arg\min_{t\leq T} \mathbb{E}\left[\big\Vert \nabla V(\theta_t)   \big\Vert^2 \right]$.

\end{proof}

\section{Additional Details on the Hyperparameters}\label{details on benchmarks}
\begin{table}[H]
\caption{Hyperparameter setting for the Acrobot benchmark.}
\vskip 0.15in
\begin{center}
\begin{small}
\begin{sc}
\begin{tabular}{lccccr}
\toprule
\textbf{Method} & \textbf{$\eta$} & \textbf{$\alpha$} & \textbf{$p_t$} \\
\midrule
GPOMDP    & $10^{-4}$ & -- & --\\
SVRPG    & $10^{-5}$ & -- & --\\
SRVRPG & $6\cdot 10^{-6}$ & -- & --\\
STORM-PG    & $10^{-4}$ & 0.9 & -- \\
PAGE-PG   & $5\cdot 10^{-6}$ & -- & $0.01,\,0.4$ \\
\bottomrule
\end{tabular}
\end{sc}
\end{small}
\end{center}
\vskip -0.1in
\end{table}

\begin{table}[H]
\caption{Hyperparameter setting for the Cartpole benchmark.}
\vskip 0.15in
\begin{center}
\begin{small}
\begin{sc}
\begin{tabular}{lccccr}
\toprule
\textbf{Method} & \textbf{$\eta$} & \textbf{$\alpha$} & \textbf{$p_t$} \\
\midrule
GPOMDP    & $10^{-4}$ & -- & --\\
SVRPG    & $10^{-4}$ & -- & --\\
SRVRPG & $10^{-6}$ & -- & --\\
STORM-PG    & $4\cdot 10^{-5}$ & 0.99 & -- \\
PAGE-PG   & $5\cdot 10^{-5}$ & -- & $0.8$ \\
\bottomrule
\end{tabular}
\end{sc}
\end{small}
\end{center}
\vskip -0.1in
\end{table}

\end{document}